%% file: arxiv.tex
\documentclass{article}

\usepackage{microtype}
\usepackage{graphicx}
\usepackage{subcaption}
\usepackage{booktabs}
\usepackage{graphicx}
\usepackage{tabularx}
\usepackage{hyperref}

\usepackage[accepted]{icml2026}

\usepackage{amsmath}
\usepackage{amssymb}
\usepackage{mathtools}
\usepackage{amsthm}
\input{math_commands}

\usepackage[table]{xcolor}
\usepackage{bm}
\usepackage[dvipsnames]{xcolor}
\usepackage{amsmath,amsfonts,bm,pifont}
\usepackage{enumitem}
\usepackage{thm-restate}
\usepackage{mdframed}
\usepackage{multirow}
\usepackage{xspace}
\usepackage{inconsolata}
\usepackage{soul}

\usepackage[capitalize,noabbrev]{cleveref}

\newcommand{\beginsupplement}{%
        \setcounter{table}{0}
        \renewcommand{\thetable}{A.\arabic{table}}%
        \setcounter{figure}{0}
        \renewcommand{\thefigure}{A.\arabic{figure}}%
}

\newcommand{\zerodisplayskips}{%
  \setlength{\abovedisplayskip}{2mm}%
  \setlength{\belowdisplayskip}{2mm}%
  \setlength{\abovedisplayshortskip}{1mm}%
  \setlength{\belowdisplayshortskip}{1mm}}
\appto{\normalsize}{\zerodisplayskips}
\appto{\small}{\zerodisplayskips}
\appto{\footnotesize}{\zerodisplayskips}
\makeatother

\newcommand{\emetric}{\(\mathcal{E}\text{‑}\mathcal{W}_2\)\xspace}
\newcommand{\torusmetric}{\(\mathbb{T}\text{‑}\mathcal{W}_2\)\xspace}
\newcommand{\ticametric}{\(\mathrm{TICA}\text{‑}\mathcal{W}_2\)\xspace}
\newcommand{\namelong}{\textsc{Autoregressive Boltzmann Generators}\xspace}
\newcommand{\nameshort}{\textsc{ArBG}\xspace}
\newcommand{\shortname}{\nameshort}
\newcommand{\namemodel}{\textsc{Robin}\xspace}
\newcommand{\modelname}{\namemodel}

\newcommand{\proplinebreak}{\\&}
\definecolor{mypurple}{RGB}{108,60,153}

\usepackage[textsize=tiny]{todonotes}

\icmltitlerunning{Autoregressive Boltzmann Generators}

\begin{document}

\twocolumn[
  \icmltitle{Autoregressive Boltzmann Generators}

\icmlsetsymbol{equal}{*}

\begin{icmlauthorlist}
  \icmlauthor{Danyal Rehman}{mila,broad,aithyra,udem}
  \icmlauthor{Charlie B. Tan}{mila,udem,oxford}
  \icmlauthor{Yoshua Bengio}{mila,udem,cifar}
  \icmlauthor{Avishek Joey Bose}{mila,imperial,equal}
  \icmlauthor{Alexander Tong}{aithyra,equal}
\end{icmlauthorlist}

  \icmlaffiliation{mila}{Mila -- Qu\'{e}bec AI Institute}
  \icmlaffiliation{broad}{Broad Institute of MIT \& Harvard}
  \icmlaffiliation{aithyra}{Aithyra}
  \icmlaffiliation{oxford}{University of Oxford}
  \icmlaffiliation{udem}{Universit\'{e} de Montr\'{e}al}
  \icmlaffiliation{cifar}{CIFAR Senior Fellow}
  \icmlaffiliation{imperial}{Imperial College London}

  \icmlcorrespondingauthor{Danyal Rehman}{danyal.rehman@mila.quebec}
  \icmlcorrespondingauthor{Alexander Tong}{atong@aithyra.at}

  \icmlkeywords{Boltzmann Generators, AI for Science, Molecules, Peptides}

  \vskip 0.3in
]

\printAffiliationsAndNotice{\textsuperscript{*}Equal advising.}

\begin{abstract}
\looseness=-1
Efficient sampling of molecular systems at thermodynamic equilibrium is a hallmark challenge in statistical physics. This challenge has driven the development of Boltzmann Generators (BGs), which allow rapid generation of uncorrelated equilibrium samples by combining a generative model with exact likelihoods and an importance sampling correction. However, modern BGs predominantly rely on normalizing flows (NFs), which either suffer from limited expressivity due to strict invertibility constraints (discrete time) or computationally expensive likelihoods (continuous time). In this paper, we propose \namelong (\nameshort)---a novel autoregressive modelling framework---that overcomes these limitations by departing from the flow-based BG paradigm. \nameshort circumvents the topological constraints of flows and enables sequential inference-time interventions, while offering enhanced scalability by leveraging architectures effective in Large Language Models. We empirically demonstrate that \nameshort leads to significant improvements over flow-based models across all benchmarks, but particularly in larger peptide systems such as the 10-residue Chignolin. Furthermore, we introduce \namemodel, a 132 million parameter transferable model trained with the \nameshort framework which improves over the previous state-of-the-art, reducing the zero-shot energy error, \emetric, on 8-residue systems by over 60$\%$. The code can be found at the following link:\ \url{https://github.com/danyalrehman/autobg}.\end{abstract}
\everypar{\looseness=-1}
\section{Introduction}
\label{sec:introduction}

\looseness=-1
A central insight of statistical mechanics is that macroscopic phenomena---such as protein folding~\citep{noe_constructing_2009,lindorff-larsen_how_2011}, magnetization of an Ising model~\citep{yang1952spontaneous}, and crystal structure formation~\citep{parrinello1980crystal,matsumoto2002molecular}---are governed by the ensemble of microscopic states at equilibrium. This equilibrium distribution is known as the \textit{Boltzmann distribution}:\  $\mu_{\text{target}}(x) \propto \exp\left(-\gE(x)\right)$, where $\gE(x)$ is the dimensionless potential energy of a conformation $x \in \R^{n \times 3}$. Accordingly, the computational challenge is to efficiently draw statistically-independent samples from this target distribution, $\mu_{\text{target}}(x)$.

\looseness=-1
A key characteristic of this sampling problem is that states in thermodynamic equilibrium---i.e., modes of the distribution---are often sparse and well-separated by high-energy barriers~\citep{wirnsberger2020targeted,rizzi2021targeted}. The dominant approach for exploring this conformational landscape remains Molecular Dynamics (MD) simulations~\citep{alder1959studies,rahman1964correlations}, which seek to simulate the equations of motion with finely-discretized time-steps;\ however, this approach suffers from a severe timescale issue. Specifically, MD often use time-steps on the order of femtoseconds ($10^{-15} \text{s}$), but mode mixing across these high-energy barriers typically requires timescales of microseconds ($10^{-6} \text{s}$) to seconds ($10^0 \text{s}$)~\citep{OLSSON2026103213}. Consequently, the vast majority of MD computation is spent simulating high-frequency vibrations within local minima rather than exploring the global energy landscape, rendering MD computationally prohibitive for practical problems~\citep{perez2025breaking}. While many accelerated MD schemes have been explored~\citep{henin2022enhanced,syed_nonreversible_2021,klein2023timewarp,kapuśniak2025marsfmgenerativemodelingmolecular}, they still have difficulty with this fundamental mixing problem.
\begin{figure*}[!t]
  \centering
  \includegraphics[width=\textwidth]{figures/scaling_plus_energy_metrics.pdf}
  \caption{\textbf{(Left)} The training loss curves of models of varying scale (ranging from 8M to 126M parameters) as a function of the total number of floating point operations (FLOPs) on the decapeptide:\ Chignolin. \textbf{(Right)} Global and local performance metrics as a function of the number of energy evaluations demonstrating inference-time scaling between our transferable model \namemodel, the previous state-of-the-art BG Prose, and a short molecular dynamics (MD) chain for the same number of energy function evaluations.}
  \label{fig:scaling}
\end{figure*}

\looseness=-1
Boltzmann Generators (BGs)~\cite{noe2019boltzmann} have emerged as a powerful framework to circumvent this. BGs learn a generative model $p_\theta(x)$ to propose samples for importance sampling, leveraging the exact model likelihood $p_\theta(x)$ and target energy $\mathcal{E}(x)$. This allows for the parallel generation of independent and consistent samples \textit{without having to traverse between modes}, making BGs an attractive framework for equilibrium sampling of large molecular systems;\ however, to satisfy the requirement for tractable likelihoods, the field has relied almost exclusively on normalizing flows (NFs) in either discrete~\citep{tabak2010density,dinh2016density,rezende2015variational,tan_scalable_2025,rehman2025fortforwardonlyregressiontraining} or continuous time~\citep{chen_neural_2018,rehman2025falconfewstepaccuratelikelihoods}.

\looseness=-1
This reliance on flow-based architectures, however, imposes severe theoretical limitations. Fundamentally, NFs in practical instantiations are not only diffeomorphisms but also \emph{homeomorphisms}, as the prior is a single Gaussian~\citep{cornish2020relaxing,dupont2019augmented,runde2005taste}. Consequently, such generative models preserve the topology of their domain and struggle to model target distributions with distinct topologies to the prior, e.g., disjoint supports,  differing number of connected components, or ``holes". The equilibrium distribution of molecular conformations precisely exhibits these challenging topologies, as several metastable states are separated by regions of high-energy barriers. To morph the single connected mode of a Gaussian to these effectively separated states, a flow is forced to perform extreme deformations, stretching space across thin bridges and compressing it into modes. This leads to highly non-smooth mappings prone to exploding Lipschitz constants and ill-conditioned Jacobians, resulting in discrete time NFs being notoriously unstable, greatly limiting model expressivity.

\looseness=-1
Continuous normalizing flows (CNFs)~\citep{chen_neural_2018} are free from the same architectural constraints as discrete-time NFs. In addition, modern training strategies for flow-matching~\citep{peluchetti2021,liu_rectified_2022,lipman_flow_2022,albergo_building_2023} can greatly stabilize optimization, yet the ill-conditioning manifests itself during inference. The learned vector field becomes highly non-smooth, resulting in \textit{stiff} dynamics~\citep{hochbruck2010exponential,hochbruck2020convergence} leading to a large number of function evaluations being required at inference for accurate likelihoods to be obtained. This presents an unavoidable tradeoff:\ (\textbf{1}) Discrete-time NFs yield efficient likelihoods but struggle with poor sample quality and limited expressivity, while (\textbf{2}) CNFs are more expressive, but require expensive ODE integrators for likelihood evaluation, rendering importance sampling computationally prohibitive.

\looseness=-1
\xhdr{Present work} 
In this work, we propose \namelong (\nameshort), a novel alternative to flow-based BGs that circumvents these limitations. \nameshort employs an autoregressive paradigm to factorize the molecular density into a sequence of conditional densities:\ $p_\theta(x) = \prod_j p(x_j | x_{<j})$. This formulation offers three distinct advantages:\ (\textbf{1}) \nameshort overcomes the expressivity bottlenecks of discrete time flows without incurring the computational cost of continuous time flows;\ (\textbf{2}) \nameshort avoids the numerical instability of learning high-distortion diffeomorphisms, allowing it to model discontinuous jumps and separate modes present in complex multi-modal target densities;\ and (\textbf{3}) \nameshort benefits from the investment and advances in discrete generative modelling that power modern LLMs, and exhibits similar scaling properties in both model size and inference samples (as shown in \cref{fig:scaling}). 

\looseness=-1
Our main contributions are summarized as follows:\
\begin{itemize}[topsep=0pt, partopsep=0pt, itemsep=4pt, parsep=0pt, leftmargin=*]
    \item \looseness=-1 We introduce \namelong (\nameshort), the first scalable, autoregressive and diffeomorphism-free method for Boltzmann Generation.
    \item \looseness=-1 We investigate various proposal formulations, demonstrating that discrete binning not only offers superior training stability and scalability compared to continuous mixture models, but also unlocks sequential inference-time interventions that are not possible in flow-based architectures.
    \item \looseness=-1 We demonstrate that \nameshort consistently outperforms all baseline methods across every single-peptide benchmark, with especially strong performance and scalability demonstrated on the 10-residue Chignolin system (\cref{fig:scaling}). 
    \item \looseness=-1 We introduce \namemodel, a 132-million parameter \textit{transferable} autoregressive generative model that achieves zero-shot generalization to unseen peptides, reducing Energy-W2 error, \emetric, by over $60\%$ compared to the previous state-of-the-art approach on large peptides:\  Prose, a discrete-time normalizing flow~\citep{tan2025amortizedsamplingtransferablenormalizing}. 
\end{itemize}

\newpage\clearpage
\section{Background and Preliminaries}
\label{sec:background}

\looseness=-1
\xhdr{Thermodynamic Equilibrium Sampling} We consider molecular systems comprising of $n$ atoms, represented at all-atom resolution by its conformations $x \in \mathbb{R}^{n \times 3}$. The equilibrium behaviour of the system is characterized by the following target Boltzmann distribution:\
\begin{equation*}
    \mu_{\text{target}}(x) \propto \exp\left(-\mathcal{E}(x)\right), \hspace{2pt} \mathcal{Z} = \int_{\mathcal{X}} \exp\left(-\mathcal{E}(x)\right) dx.
\end{equation*}
\looseness=-1
Here, $\mathcal{E} : \mathbb{R}^{n \times 3} \to \mathbb{R}$ denotes the potential energy, and $\mathcal{Z}$ is the partition function, which is computationally intractable to evaluate exactly. Macroscopic properties are obtained by computing observables $\phi(x)$ as expectations with respect to the Boltzmann distribution $\mu_{\text{target}}(x)$. A commonly-used approach to obtain consistent samples from the Boltzmann distribution leverages self-normalized importance sampling (SNIS) with an easy-to-sample proposal distribution $p(x)$:
\begin{equation*}
    \sE_{ \mu_{\text{target}}(x) }\left[\phi(x)\right] = \sE_{p(x)} \left[ \phi(x) \bar{w}(x) \right] \approx \frac{\sum^K_{i=1} w(x^i) \phi(x^i)}{\sum^K_{i=1} w(x^i)},
\end{equation*}
\looseness=-1
where $w(x^i) = \exp \left( -\gE(x^i)  \right) / p(x^i)$ is the unnormalized importance weight and $x^i \sim p(x), i \in[K]$ are $K$ statistically-independent samples from the proposal $p(x)$. It is well known that SNIS is a consistent estimator whose variance depends on the distributional overlap of the chosen proposal $p(x)$ compared to $\mu_{\text{target}}(x)$~\citep{owen2013monte}.

\cut{
Calculating properties of the molecule requires computing expectations over observables $o(x)$ under this distribution, $\mathbb{E}_{x \sim p(x)}[o(x)]$. However, collecting samples under $p$ is difficult, and therefore alternative strategies, such as self-normalized importance sampling (SNIS), which can use an alternative proposal distribution to compute $p(x)$
\begin{equation}
    \mathbb{E}_{x \sim p(x)}[o(x)] = \mathbb{E}_{x \sim q(x)} \left [\frac{p(x) o(x)}{q(x)} \right]
\end{equation}
for some proposal distribution $q$.
}

\looseness=-1
\xhdr{Boltzmann Generators} A Boltzmann Generator~\citep{noe2019boltzmann} learns a parameterized proposal distribution $p_{\theta}(x)$ that serves to approximate the target distribution $\mu_{\text{target}}(x)$. Crucially, $p_{\theta}(x)$ admits a tractable likelihood for samples $x \sim p_{\theta}(x)$, enabling the computation of importance weights required for self-normalized importance sampling. For instance, when $p_{\theta}(x)$ is a normalizing flow defined by a composition of invertible maps $f_{\theta} = f_M \circ \dots \circ f_1$, the likelihood can be computed by the change-of-variables formula $\log p_{\theta}(x_M) = \log p_0(x_0) - \sum_{i=1}^M \log \left| \partial f_{i,\theta}(x_{i-1}) / \partial x_{i-1} \right|$, with $x_i = f_i(x_{i-1})$, and crucially $f_i$ being invertible.

\looseness=-1
In cases where obtaining a likelihood is theoretically possible, but computationally expensive---such as in CNFs---an SNIS-based resampling step becomes equally impractical. To see this more clearly, we can detail the Augmented ODE of $3n + 1$ dimensions, which tracks both the particle evolution $x_t$ across simulation time $t\in[0,1]$ using the learned velocity field associated with the CNF $v_{\theta}(x_t, t)$, and the corresponding evolution of the induced log-density $\log p_{t,\theta}(x_t)$. We can simulate this trajectory from time $t=0$ to time $t=1$ by sampling from a prior distribution $x_0 ~\sim p_0(x_0)$ and then integrating along the Augmented ODE:
\begin{equation*}
\label{eq:continuous_likelihood}
\begin{bmatrix}
    x_1 \\
    \log p_{1,\theta}(x_1)
\end{bmatrix}
 =\begin{bmatrix}
 x_0 \\
 \log p_0(x_0)
 \end{bmatrix}
 +
 \int_0^1 \begin{bmatrix}
v_{\theta}(x_t, t) \\
-\nabla \cdot v_{\theta}(x_t, t)\end{bmatrix} d t.
\end{equation*}
\looseness=-1
Here $\nabla \cdot$ is the divergence operator, which requires $O(d)$ function evaluations of the network, $v_{\theta}$, per-integration step with $d = n\times3$. While faster unbiased estimators of the divergence, such as the Hutchinson trace estimator~\citep{meyer2021hutch++} exist, the added variance incurred renders them unsuitable for Boltzmann Generators~\citep{klein2023equivariant}.

\cut{
and enables efficient independent sampling. Crucially, this reweighting step requires samples $x \sim q$ and an exact and computationally tractable likelihood $q_\theta(x)$ for every generated sample. If the likelihood is expensive to compute (as in CNFs), the cost of obtaining expectations over observables becomes prohibitively expensive. For a CNF-based proposal with vector field $v$, samples are computed a $d+1$ dimensional ODE:
\begin{equation}\label{eq:continuous_likelihood}
\begin{bmatrix}
    x \\
    \log q_\theta(x)
\end{bmatrix}
 = \int_0^1 \begin{bmatrix}
v(x_t, t) \\
-\text{tr} \left( \frac{\partial v}{\partial x_t} \right )\end{bmatrix} d t
\end{equation}
which requires $O(d)$ function evaluations per integration step.
}

\section{\namelong}

\looseness=-1
We now consider an alternative class of generative models for constructing the proposal distribution $p_{\theta}(x)$ in a Boltzmann Generator---autoregressive (AR) models. We introduce \namelong, the first diffeomorphism-free AR model for molecular systems that operates directly on atoms in their native Cartesian coordinates. In the context of Boltzmann Generators, the central challenge of AR modelling arises from the continuous nature of molecular configurations, in contrast to the discrete data on which AR models are most frequently employed. This setting, therefore, presents an opportunity for developing novel AR frameworks for continuous-state systems. We further motivate our model choice by identifying two key properties required of a proposal distribution within a Boltzmann Generator:\

\looseness=-1
\begin{enumerate}[label=\textbf{(\arabic*)},left=0pt,nosep]
\item 
\looseness=-1
\xhdr{Fast and Accurate Likelihood} A crucial requirement of any proposal in a BG is to facilitate SNIS-based correction of samples---necessitating access to fast, unbiased, and preferably exact likelihood evaluation $p_{\theta}(x)$.
\item
\looseness=-1
\xhdr{Scalability} We require expressive generative model families $p_{\theta}(x)$ that can capture the complex, sparse, and rugged energy landscape of high-dimensional molecular systems with predictable scaling behaviour. 
\end{enumerate}

\looseness=-1
We highlight that both desiderata are demonstrably satisfied by autoregressive models, but not necessarily flow-based models. In particular, autoregressive models \emph{exactly} factorize  the joint density over $x$ as a sequence of conditional distributions that is used to predict the next dimension conditioned on the history $x_{< j} = [x_1, \ldots x_{j-1}]$, for $j \in [d]$:
\begin{equation}
\label{eq:autoregressive}
    \log p_\theta(x) = \sum_{j=1}^d \log p_\theta(x_j | x_{<j}).
\end{equation}
\looseness=-1
Clearly, \eqref{eq:autoregressive} allows for exact likelihood computation in a single pass, avoiding Jacobian determinants or computationally expensive numerical solvers for Neural ODEs. Moreover, AR models have achieved empirical success across a spectrum of domains at scale, including large-scale discrete modelling of text~\citep{comanici2025gemini} and images~\citep{dosovitskiy2020image}. Indeed, the diversity of data domains tackled by autoregressive models comes without specific constraints, such as invertible architectures, or the need to model diffeomorphisms. The latter fact we argue is particularly important for molecular modelling due to  the non-smooth nature of the target Boltzmann $\mu_{\text{target}}(x)$.

\looseness=-1
We next outline how to build \nameshort in~\S\ref{sec:model_instantiation}, and explore new unlocked capabilities of an AR model for BGs in~\S\ref{sec:tools}.

\cut{
\xhdr{Autoregressive Modelling} In this work, we instead parameterize $q_\theta(x)$ as an autoregressive model. We decompose the joint probability of a conformation $x$ into a product of conditional probabilities:
\begin{equation}
    q_\theta(x) = \prod_{i=1}^d q_\theta(x_i | x_{<i})
\end{equation}
where $x_{< i} = [x_1, \ldots x_{i-1}]$. This formulation offers two distinct advantages for Boltzmann generation. First, it allows for exact likelihood computation in a single pass, avoiding expensive Jacobian determinants or ODE solves required by flows. Second, it avoids learning a diffeomorphism in $\R^d$, and is able to model discontinuous jumps and multimodal distributions.

Prior work models molecule conformations using either internal or Cartesian coordinates $x \in \R^{3N = d}$ directly. There are two recent methods for defining an autoregressive ordering over Cartesian coordinates. \citet{tan_scalable_2025} uses the standard ordering in Protein Data Bank (PDB) files, and \citet{tan2025amortizedsamplingtransferablenormalizing} uses a ``backbone permutation'' approach, where all backbone atoms are placed before all sidechain atoms (see \cref{app:ordering} for details). We use the standard PDB ordering as all atoms are added to the molecule. We choose the PDB ordering as it allows us to delay important decisions about backbone atoms as long as possible.
}

\subsection{Autoregressive Modelling of Conformations}
\label{sec:model_instantiation}

\looseness=-1
We consider inputs of the form $x \in \R^{n \times 3}$, which are flattened into a single $d$ dimensional vector by an AR model as defined in~\eqref{eq:autoregressive}. From here onwards, we use subscripts such as $x_j$ to denote the $j$-th dimension of the input vector $x$ rather than a continuous time index as done for CNFs. As AR models require an ordering to model molecular states, we use a residue-by-residue ordering in which the sidechains for each residue immediately follow the backbone atoms.

\looseness=-1
A key technical challenge of instantiating AR models over continuous spaces is determining the parametrization of the conditional distribution over dimensions $p_\theta(x_j | x_{<j})$. We explore several options by leveraging existing ideas from Mixture Density Networks (MDN)~\citep{bishop1994mixture}, which output the parameters of the conditional distribution as a mixture. In addition, we also offer a novel parametrization utilizing a uniform binning strategy that is simple yet enjoys closer alignment to LLM training---unveiling predictable scaling behaviours but now in the context of BGs. 

\looseness=-1
\xhdr{Conditionals as Discretized Mixtures}
Following the pioneering work of PixelCNN++~\citep{salimans2017pixelcnn++}, we demonstrate how to adapt such an approach to model molecular conformations, leading to MoL-PixelCNN++ and a novel extension in GMM-PixelCNN++. These serve as modernized instantiations of the MDN  for molecular conformations and later as baselines in our experiments~\S\ref{sec:experiments}.

 \looseness=-1
 To construct these conditional mixtures, we model the input $x_j$, e.g. a \emph{singular} spatial dimension of an atom, by first discretizing the space into $B$ uniform bins of width $\Delta$ over a pre-determined interval range  $\gI = [C_{\text{min}}, C_{\text{max}})$, with range cutoffs $C_{\text{min}}$ and $C_{\text{max}}$ picked through a data standardization step. Formally, the mapping $\mathcal{Q}: \R \to [L]$ from a continuous coordinate $ x_j \in \R$ to a bin index $b_l \in [L]$ is given by:
\begin{equation*}
    b = \mathcal{Q}(x_j) = \left \lfloor \frac{\tilde x_j - C_{\text{min}}}{\Delta} \right \rfloor, \, \tilde x_j = \texttt{clip}(x_j, C_{\text{min}}, C_{\text{max}}). 
\end{equation*}
\looseness=-1
After binning, we assume a latent assignment $ k$ sampled from a $K$-component mixture, $\pi_k \sim \text{Cat}\left( \pi_1, \dots, \pi_k \right)$. Each mixture component can then be prescribed by an easy-to-parametrize distribution, such as the logistic distribution, leading to a Mixture of Logistics (MoL) or a Gaussian Mixture Model (GMM). We can compute the discretized probability mass that an observed value $x_j$ falls in bin $b$, i.e. $p(X=x_j | x_{<j})$, by leveraging the integral of the CDF differences of the logistic distribution at $\tilde x_j + \Delta/2$ and $\tilde x_j - \Delta/2$. This leads to the following parameterizations of the conditional $p_{\theta}(x_j | x_{<j})$:
\begin{align}
\label{eq:mol}
& \sum_{k=1}^{K} \pi_k \left[ \varphi \left( \frac{\tilde x_j + \Delta/2 - \mu_k}{\sigma_k} \right) - \varphi \left( \frac{\tilde x_j - \Delta/2 - \mu_k}{\sigma_k} \right) \right] \nonumber \\
& \text{where} \, \pi_k \geq 0, \sum^K_{k=1} \pi_k = 1, \sigma_k>0.
\end{align}
\looseness=-1
$\pi_k$ is the weight of the $k$-th mixture component, $\varphi(\cdot)$ is the logistic function or the standard normal CDF, i.e., $\varphi(z) = 0.5\left(1 + \text{erf}\left(z/\sqrt{2}\right)\right)$, for the MoL and GMM cases, respectively. The edge cases of the first and last bin for the MoL-PixelCNN++ are handled by replacing $x_j - \Delta/2$ and $x_j + \Delta/2$ by $-\infty$ and $+\infty$, respectively. Finally, we can also handle the edge case for GMM-PixelCNN++'s leftmost and rightmost bins:
\begin{align*}
    p_k\left(\tilde x_j \in b_0\right) = \Phi\left(\frac{\tilde x_j + \Delta/2 - \mu_k}{\sigma_k}\right), \\
     p_k\left(\tilde x_j \in b_L\right) = 1 - \Phi\left(\frac{\tilde x_j - \Delta/2 - \mu_k}{\sigma_k}\right).
\end{align*}
\cut{
\looseness-1
Similarly, we can choose $p_{\theta}(x_j | x_{<j})$ to be a Gaussian Mixture Model (GMM) with $K$ components that adheres to the binning of $\gI$. The probability of a coordinate value $x_j$ is given by summing across the mixtures $\pi_k$:
\begin{align}
\label{eq:gmm}
    & \sum_{k=1}^{K} \pi_k  \underbrace{\left[\Phi\left(\frac{\tilde x_j + \Delta/2 - \mu_k}{\sigma_k}\right) - \Phi\left(\frac{\tilde x_j - \Delta/2 - \mu_k}{\sigma_k}\right)\right]}_{p_k(\tilde x_j \in b)} \nonumber \\
    & \text{where} \, \sum^K_{k=-1}\pi_k \geq 0, \,  \Phi(z) = \frac{1}{2}\left(1 + \text{erf}\left(\frac{z}{\sqrt{2}}\right)\right),
\end{align}
\looseness=-1
where $\Phi(z)$ is the standard normal and $\mu_k$ and $\sigma_k$ are the means and standard deviations for each Gaussian in the mixture. Finally, we can also handle the edge case for the leftmost and rightmost bins:

\begin{align*}
    p_k\left(\tilde x_j \in b_0\right) = \Phi\left(\frac{\tilde x_j + \Delta/2 - \mu_k}{\sigma_k}\right), \\
     p_k\left(\tilde x_j \in b_L\right) = 1 - \Phi\left(\frac{\tilde x_j - \Delta/2 - \mu_k}{\sigma_k}\right).
\end{align*}
}
\looseness=-1
In both cases, the conditional mixtures admit an analytic log-likelihood, enabling conventional MLE-based training.

\looseness=-1
\xhdr{Uniform Bin Parameterization}
While elegant in theory, Mixture Density Networks like the MoL-PixelCNN++ and GMM-PixelCNN++ are prone to mode collapsing of the mixture components $\pi_k$ to a small subset~\citep{deng2022deep}, potentially leading to suboptimal performance on harder systems of interest. We remedy this problem by abandoning mixture models altogether and introducing, arguably, the simplest parameterization of $p_{\theta}(x_j | x_{<j})$ by predicting directly the bin centres $b_l$ as a Categorical distribution during training. This allows us to directly use an autoregressive model, as commonly done for LLMs for next-token prediction, but now for molecular data. At inference, to recover a continuous coordinate, we can simply add uniform noise to the sampled bin centre: $x_j = b_l + u_l$, where $u_l \sim \text{Unif}(-\Delta/2, \Delta / 2)$. Such a uniform bin parameterization induces the following piecewise-constant conditional density:\
\begin{equation}
\label{eq:unif_bin_parameterization}
p_\theta(\tilde{x}_j | x_{<j})
= \sum_{l=1}^L \pi_\theta(b_l | x_{<j})\frac{\mathbf{1}\{\tilde{x}_j\in b_l\}}{\Delta},
\end{equation}
\looseness=-1
where $ \pi_\theta(b_l | x_{<j}) = \text{Cat}(b_0, \dots, b_L)$. In~\eqref{eq:unif_bin_parameterization} we observe that when $\Delta$ is the same for all bins---i.e., uniform binning---then the conditional log density has a constant offset $\log \Delta$ which vanishes with an increasing number of bins.   

\looseness=-1
Under the uniform bin parameterization, we are also able to quantify the exact log-likelihood error.
Let $p^\star(\tilde{x}_j | x_{<j})$ denote the true conditional density and define its bin masses:
\begin{equation*}
    p^\star(b_l | x_{<j}):=
\int_{b_l} p^\star(u | x_{<j})\,du.
\end{equation*}
\looseness=-1
Also, define the true conditional density restricted to a bin:
\begin{equation*}
p^\star \left(\tilde{x}_j | \tilde{x}_j \in b_l, x_{<j}\right)
:=
\frac{p^\star(\tilde{x}_j | x_{<j})}{p^\star(b_l |  x_{<j})}\,\mathbf{1}\{\tilde{x}_j\in b_l\}.
\end{equation*}
\looseness=-1
The lowest attainable error of $p_{\theta}(x_j|x_{<j})$ as measured by the KL-divergence, is given by the following proposition.

\begin{mdframed}[style=MyFrame2]
\begin{restatable}{proposition}{propklerror}
\label{prop:min_error_in_kl}
Let the true conditional density be given by $p^*(x_j | x_{<j})$ and the autoregressive model's conditional density $p_{\theta}(x_j | x_{<j})$ under the uniform bin parameterization. The resulting minimum achievable error of the autoregressive model in KL is,
\begin{align*}
&\inf_\theta \,
\KL\left(p^*(x_j | x_{<j})\Vert p_{\theta}(x_j | x_{<j})\right)
=\proplinebreak
\sum_{l=1}^L p^\star(b_l |  x_{<j})
\KL\left(p^\star \left(\tilde{x}_j | \tilde{x}_j \in b_l, x_{<j}\right)\Vert\mathrm{Unif}(\Delta)\right).
\end{align*}
\end{restatable}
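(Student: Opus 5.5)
The plan is to decompose the KL divergence bin by bin, using the chain rule for KL with respect to the partition of $\gI$ into the bins $b_1,\dots,b_L$. Fix the context $x_{<j}$ and suppress it from the notation. Under \eqref{eq:unif_bin_parameterization}, the model density on bin $b_l$ equals $\pi_\theta(b_l)/\Delta$. This is the product of the model's bin mass $\pi_\theta(b_l)$ and the uniform density $1/\Delta$ on $b_l$. Likewise, the true density on $b_l$ factors as $p^\star(\tilde x_j) = p^\star(b_l)\, p^\star(\tilde x_j \mid \tilde x_j\in b_l)$. Substituting both factorizations into
\[
\KL(p^\star\Vert p_\theta)=\sum_{l=1}^L\int_{b_l} p^\star(u)\log\frac{p^\star(u)}{\pi_\theta(b_l)/\Delta}\,du
\]
and splitting the logarithm gives two pieces. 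The first is $\sum_l p^\star(b_l)\log\bigl(p^\star(b_l)/\pi_\theta(b_l)\bigr)$, which is the discrete KL between the bin masses of the truth and the model. The second is
\[
\sum_l p^\star(b_l)\int_{b_l} p^\star(u\mid u\in b_l)\log\frac{p^\star(u\mid u\in b_l)}{1/\Delta}\,du ,
\]
which is exactly $\sum_l p^\star(b_l)\,\KL\bigl(p^\star(\cdot\mid b_l)\Vert \mathrm{Unif}(\Delta)\bigr)$, where $\mathrm{Unif}(\Delta)$ is the uniform density on $b_l$.

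The second piece does not depend on $\theta$. Taking the infimum therefore only affects the discrete KL. That term is nonnegative by Gibbs' inequality, and it vanishes when $\pi_\theta(b_l)=p^\star(b_l)$ for every $l$.

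The step needing the most care is showing that this minimizer is actually attained or approached. The categorical head $\pi_\theta(\cdot\mid x_{<j})$ must be able to represent arbitrary bin-mass vectors. I would assume the network is expressive enough, for example a softmax over free logits. Under that assumption any point in the interior of the simplex is attainable. Bins with zero true mass are handled by driving their logits to $-\infty$, which is why the statement uses an infimum rather than a minimum.

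Two further details need brief treatment. First, the clipping to $\gI$ is already absorbed into $\tilde x_j$, so all true mass lies in the bins; any edge mass from clipping sits in the end bins. Second, the integrals are well-defined whenever the right-hand side is finite. Here I will use the conventions $0\log 0=0$ and $p^\star(b_l)=0 \Rightarrow$ the corresponding term is dropped.
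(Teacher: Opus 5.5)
Your proposal is correct and follows the paper's proof. Both arguments split the KL over the disjoint bins, write the model density on $b_l$ as $\pi_\theta(b_l\mid x_{<j})/\Delta$, and separate out the discrete KL between bin masses plus the $\theta$-independent term $\sum_l p^\star(b_l\mid x_{<j})\,\KL\bigl(p^\star(\tilde{x}_j\mid \tilde{x}_j\in b_l, x_{<j})\Vert\mathrm{Unif}(\Delta)\bigr)$, then conclude by nonnegativity of the discrete KL at $\pi_\theta = p^\star(b_l\mid x_{<j})$. Your explicit expressivity assumption on the categorical head, and your remark that zero-mass bins under a softmax give an infimum rather than a minimum, are refinements the paper leaves implicit rather than a different route.
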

\end{mdframed}
\looseness=-1
\Cref{prop:min_error_in_kl} quantifies the intrinsic loss of modelling resolution induced by piecewise-uniform de-quantization within bins. Intuitively, it explicates that the mismatch corresponds to precisely the true density within each bin not being exactly uniform with width $\Delta$, which is precisely the irreducible error of this AR model parameterization. Moreover, we highlight the KL term is merely the \emph{Shannon entropy} of $p^\star \left(\tilde{x}_j | \tilde{x}_j \in b_l, x_{<j}\right)$ up to a constant as we are comparing it against the uniform distribution $\mathrm{Unif}(\Delta)$.
In~\cref{fig:bin-count-by-coords} and~\cref{fig:bin-count-analysis} in the Appendix, we quantify the impact of uniform binning for single peptide systems we consider by analyzing the distribution of coordinates in each bin index as well as irreducible error due to this distributional mismatch. Importantly, choosing the appropriate number of bins \emph{still allows us} to perform effective resampling through SNIS as we demonstrate in our experiments~\S\ref{sec:experiments}.

\subsection{Tools Enabled by \nameshort} 
\label{sec:tools}
\looseness=-1
\nameshort's autoregressive factorization unlocks a large toolkit of inference-time interventions unavailable to flow-based approaches that generate all data dimensions concurrently. By decomposing generation into discrete steps over each conditional, we can leverage standard techniques from modern LLMs, such as temperature scaling for diversity control, and extend them to the molecular domain. Crucially, this autoregressive structure admits intermediate intervention;\ we can analyze, correct, or discard partial conformations before the full molecule is realized. We proceed to demonstrate this capability via a granular resampling scheme.

\begin{algorithm}[htb]
    \caption{Autoregressive SMC Inference}
    \label{alg:smc_inference}
\begin{algorithmic}[1]
    \REQUIRE Pre-trained proposal $p_{\theta}$, batch size $M$, and energy for $s$-length capped residues $\gE_s$
    \STATE $w_0 \gets 1$ 
    \FOR{$j$ in $1, \ldots, d$}
    \FOR{$m$ in $1, \ldots, M$}
    \STATE $x_j^m \sim p_\theta(x_j^m | x_{<j}^m)$
    \STATE $w_j^m \gets w_{j-1}^m \left(\frac{\psi_j(x^m_j)}{\psi_{j-1}(x^m_{j-1})}\right)$
    \ENDFOR
    \IF{$\textsc{is\_residue\_end}(j)$}
        \STATE $x_{\le j} \gets \textsc{Systematic-Resample}(x_{\le j}, w_j)$
        \STATE $w_j \gets 1$
    \ENDIF
    \ENDFOR
\end{algorithmic}

\end{algorithm}

\looseness=-1
\xhdr{Autoregressive Twisted Sequential Monte Carlo} SMC sampling has been a staple tool across varied domains~\citep{doucet2001sequential,del2006sequential}. In \nameshort, we can improve efficiency by early exiting the sampling process on physically implausible substructures (e.g., steric clashes). We implement this via an Autoregressive Twisted SMC. We define a series of \textit{twist} functions $\psi_j(x_j)$ which define a number of intermediate densities $\eta_j(x_j) := p_\theta(x_j | x_{<j}) \psi_j(x_j)$. Instead of sampling directly from $p_\theta(x_j | x_{<j})$ at every step, the goal is to sample from the twisted intermediate distribution $\eta_j$. This can be accomplished by first calculating the likelihoods and the twist function values, then performing resampling (\cref{alg:smc_inference}).
\looseness=-1
In our molecular case, we define the twist functions using partial energy evaluations:
\begin{equation}
\label{eqn:twist_smc}
\psi_j(x_j) = \frac{\exp(-\gE_s \left(r_s(x_{\leq j})\right))}{p_\theta\left(r_s(x_{\leq j}) | r_{s-1}(x_{\le j})\right)},
\end{equation}
\looseness=-1
where $r(x_{\leq j})$ defines the largest (capped) residue subset of $x_{\leq j}$, inclusive of $x_j$ (c.f.~\S\ref{sec:artsmc} for details on capped residues).
This twist function allows us to inject physical validity constraints using any molecular energy function $\gE_s(\cdot)$, a version of the original $\gE(\cdot)$, that operates on peptides of length $s<d$. While resampling can be done at any point, we tailor our SMC to the molecular setting by invoking systematic resampling~\citep{douc2005comparison}, at the end of the atomistic coordinates of a generated residue. We use residue-level granularity to leverage standard peptide force-fields, though finer atom-level twists are also possible and allow for earlier rejection. This approach is fundamentally distinct from prior flow-based resampling methods like SBG~\citep{tan_scalable_2025}. While SBG must generate complete candidates before resampling, meanwhile, \nameshort enables substructure-level steering, correcting the generative process as soon as an error is detected.

\cut{
While we do not explore other twist functions in this work, we note that learned twist functions (as well as proposals)~\citep{zhao2024probabilisticinferencelanguagemodels} are likely to improve performance at the cost of additional complexity and present an intriguing direction for future work, only possible with autoregressive BGs.
}

\begin{table*}[thb]
\centering
\caption{Tri-alanine (AL3), Alanine tetrapeptide (AL4), Hexa-alanine (AL6), and Chignolin (GYDPETGTWG) results. Evaluations are performed using $2 \times 10^5$ energy evaluations;\ all methods except SBG use SNIS. Best values are in \textbf{bold}, with second-best \underline{underlined}. Split into flow-based (top), single-pass autoregressive with temperature one (middle), and tuned temperature settings (bottom).}
\label{tab:single-system-main}
\vspace{-5pt}
\resizebox{\textwidth}{!}{
\begin{tabular}{@{}lcccccccc}
    \toprule
    & \multicolumn{2}{c}{Tri-alanine (AL3)} & \multicolumn{2}{c}{Tetrapeptide (AL4)} & \multicolumn{2}{c}{Hexa-alanine (AL6)} & \multicolumn{2}{c}{Chignolin (GYDPETGTWG)}  \\
    \cmidrule(lr){2-3}\cmidrule(lr){4-5}\cmidrule(lr){6-7}\cmidrule(lr){8-9}
    Algorithm & $\mathcal{E}$-$\mathcal{W}_2$ $\downarrow$ & $\mathbb{T}$-$\mathcal{W}_2$ $\downarrow$ 
    & $\mathcal{E}$-$\mathcal{W}_2$ $\downarrow$ & $\mathbb{T}$-$\mathcal{W}_2$ $\downarrow$
    & $\mathcal{E}$-$\mathcal{W}_2$ $\downarrow$ & $\mathbb{T}$-$\mathcal{W}_2$ $\downarrow$
    & $\mathcal{E}$-$\mathcal{W}_2$ $\downarrow$ & $\mathbb{T}$-$\mathcal{W}_2$ $\downarrow$ \\
    \midrule
    ECNF++ & 2.206 $\pm$ 0.813 & 0.962 $\pm$ 0.253 
           & 5.638 $\pm$ 0.483 & 1.002 $\pm$ 0.061
           & 10.668 $\pm$ 0.285& 1.902 $\pm$ 0.055
           & --- & ---\\
    RegFlow & 0.853 $\pm$ 0.105 & 1.577 $\pm$ 0.140 & 3.277 $\pm$ 0.546 & 2.342 $\pm$ 0.102  & --- & --- & --- & ---\\
    SBG  & {0.598 $\pm$ 0.084} & 0.503 $\pm$ 0.029
            & {1.007 $\pm$ 0.382} & 1.039 $\pm$ 0.069
            &  1.189 $\pm$ 0.357 & 1.444 $\pm$ 0.140 
            & 10.819 $\pm$ 7.206 & 3.778 $\pm$ 0.440 \\
FALCON-A & 1.385 $\pm$ 0.182 & 0.343 $\pm$ 0.004
         & 2.929 $\pm$ 0.068 & 1.094 $\pm$ 0.034
         & 1.211 $\pm$ 0.105 & 1.163 $\pm$ 0.112
         & --- & --- \\
FALCON   & 0.544 $\pm$ 0.013 & 0.452 $\pm$ 0.011
         & \underline{0.686 $\pm$ 0.047} & 0.858 $\pm$ 0.077
         & 0.892 $\pm$ 0.311 & 1.256 $\pm$ 0.132
         & --- & --- \\
\midrule
GIVT & 1.354 $\pm$ 0.058 & 0.343 $\pm$ 0.008 & 1.033 $\pm$ 0.449 & 1.113 $\pm$ 0.100 & 1.206 $\pm$ 0.056 & 1.527 $\pm$ 0.048 & 45.646 $\pm$ 20.989 & 3.031 $\pm$ 0.098 \\
MoL-PixelCNN++ & 0.506 $\pm$ 0.082  & 1.024 $\pm$ 0.686 & 1.643 $\pm$ 0.504 & 1.415 $\pm$ 0.110 & 1.429 $\pm$ 0.186	& 1.264 $\pm$ 0.205 & 140.717 $\pm$ 49.113 & 3.391 $\pm$ 0.093 \\
GMM-PixelCNN++ & \underline{0.249 $\pm$ 0.025} & 0.364 $\pm$ 0.016 & 1.434 $\pm$ 0.783 & 0.806 $\pm$ 0.056 & 1.164 $\pm$ 0.037 & 1.285 $\pm$ 0.058 & 23.339 $\pm$ 6.485 & 3.007 $\pm$ 0.086\\

\nameshort (ours) ($T = 1$)
& 0.271 $\pm$ 0.113 & \textbf{0.311 $\pm$ 0.009}
& 0.886 $\pm$ 0.076 & \textbf{0.593 $\pm$ 0.008}
& \underline{0.722 $\pm$ 0.063} & \textbf{1.085 $\pm$ 0.069}
& \underline{7.942 $\pm$ 1.053} & \underline{2.780 $\pm$ 0.105}\\
\midrule
\nameshort (ours) (tuned $T$)
& \textbf{0.202 $\pm$ 0.010} & \textbf{0.312 $\pm$ 0.003}
& \textbf{0.449 $\pm$ 0.030} & \textbf{0.592 $\pm$ 0.010}
& \textbf{0.328 $\pm$ 0.122} & \textbf{1.094 $\pm$ 0.052} 
& \textbf{1.723 $\pm$ 0.075} & \textbf{2.632 $\pm$ 0.044}\\
    \bottomrule
\end{tabular}}
\end{table*}

\section{Experiments}
\label{sec:experiments}
\looseness=-1
In this section, we empirically validate the efficacy of \namelong across a variety of molecular conformation sampling tasks. Our evaluation focuses on assessing the framework's scalability on single peptides systems ranging from alanine dipeptide to the 10-residue Chignolin in the same experimental data configuration of~\citet{tan_scalable_2025}. We also evaluate the zero-shot generalization capabilities on unseen sequences using our transferable model, \namemodel, using the ManyPeptidesMD dataset introduced in~\citep{tan2025amortizedsamplingtransferablenormalizing}. 

\looseness=-1
\xhdr{Baselines}
We consider a suite of prior baselines that include the equivariant CNF \citep{klein2023equivariant,klein_transferable_2024}, and an improved version:\ ECNF++ from \citet{tan_scalable_2025}. We additionally compare against discrete normalizing flows, in RegFlow~\citep{rehman2025fortforwardonlyregressiontraining}, and the prior state-of-the-art method for single-system Boltzmann sampling SBG~\citep{tan_scalable_2025}. We also benchmark performance relative to few-step CNFs, i.e., flow-maps~\citep{boffi2025buildconsistencymodellearning,geng2025mean}:\ FALCON/FALCON-A, which differ in their training objectives~\citep{rehman2025falconfewstepaccuratelikelihoods}. For ALDP, we further include BoltzNCE, an energy-based model trained via noise-contrastive estimation~\citep{aggarwal2025boltznce}. We further train a GIVT \citep{tschannen2024givtgenerativeinfinitevocabularytransformers}, MoL-PixelCNN++, and GMM-PixelCNN++ as described in~\cref{sec:model_instantiation}, following the same training procedure as \nameshort. For the transferable setting, we compare against Timewarp~\citep{klein2023timewarp}, BioEmu~\citep{lewis2025scalable}, UniSim~\citep{yu2025unisimunifiedsimulatortimecoarsened}, TarFlow \citep{zhai2024normalizing}, and the prior SOTA for transferable Boltzmann generation in Prose~\citep{tan2025amortizedsamplingtransferablenormalizing}.

\looseness=-1
\xhdr{Metrics}
We evaluate our models with three complementary Wasserstein-based metrics following prior work. (\textbf{1}) The 2-Wasserstein energy distance (\emetric), which measures agreement between generated and reference energy distributions, providing a sensitive test of local physical accuracy and consistency with the target Boltzmann distribution. (\textbf{2}) To assess structural mode coverage, we compute a 2-Wasserstein distance in torsional space (\torusmetric) that respects angular periodicity, capturing global conformational differences and missing modes that may not be reflected in energies alone. While this torus-based metric is effective at detecting large-scale structural mismatches, it can be insensitive to rare mode loss. (\textbf{3}) Lastly, we analyze a time-lagged independent component analysis (TICA)-based 2-Wasserstein distance (\ticametric) that compares samples in a lower-dimensional space spanned by the slowest dynamical modes fit on a reference trajectory. We exclude Effective Sample Size (ESS)~\citep{kish1957confidence} as a primary metric, as it is incompatible with SMC-based schemes while also disproportionately rewarding ``mode-seeking'' models that collapse into single energy minima to minimize variance~\citep{blessing2024elboslargescaleevaluationvariational}. We instead prioritize metrics that penalize mode-dropping to ensure accurate global distributional coverage (see~\S\ref{app:ess_critique} for a detailed discussion).

\subsection{Single Peptide Systems}
\looseness=-1
We evaluate the performance of \shortname on conformation sampling tasks for single peptides, ranging from the simple alanine dipeptide (ALDP) (2 residues) to the large Chignolin (10 residues). We report our results in~\cref{tab:single-system-main}, and defer the ALDP results to~\cref{tab:aldp_results} in~\S\ref{sec:alaninedipeptide_results} due to both the simplicity of the dataset and also problems with the dataset construction leading to mode-collapse of models. We find that \nameshort comprehensively outperforms on both \emetric and \torusmetric for all considered systems when the temperature is tuned on a validation set. Without temperature tuning ($T=1$) \nameshort is still the best performing method overall, especially when scaled larger systems (AL6 and Chignolin), but has slightly worse \emetric on smaller systems. We further observe MDN baselines like GMM-PixelCNN++ and GIVT achieve slightly worse but very promising results, demonstrating the overall potential of AR models for BG's in comparison to flow-based BGs.

\begin{figure*}[t]
  \centering
  \includegraphics[width=\textwidth]{figures/energy_distribution_comparison.pdf}
  \vspace{-15pt}
   \caption{Energy histogram of the proposal and re-sampled distribution compared to ground truth MD data for all single peptide systems. }
  \label{fig:energy-distribution-all-molecules}
\end{figure*}

\begin{figure*}[t]
  \centering
  \includegraphics[width=\textwidth]{figures/chignolin_ramachandran.pdf}
  \vspace{-20pt}
   \caption{Ramachandran plots for Chignolin (top row:\ ground truth test set;\ bottom row:\ \shortname's predictions). }
  \label{fig:chignolin-rama}
\end{figure*}

\looseness=-1 
\xhdr{Scaling to Decapeptides}
SBG~\citep{tan_scalable_2025} first demonstrated that discrete NFs can be scaled to the decapeptide Chignolin---a particularly challenging molecular system due to the existence of the $\beta$-hairpin secondary structure. As shown in \cref{tab:single-system-main}, \nameshort significantly outperforms the SBG approach across all global and local evaluation metrics, further validating its scalability. Additionally, \cref{fig:energy-distribution-all-molecules} shows that the reweighted energy distribution proposed by our model closely matches that of MD simulation data. Finally, the Ramachandran plots~\citep{ramachandran1963stereochemistry} in~\cref{fig:chignolin-rama} indicate that our model accurately captures nearly all the conformational modes present in the test set. 

\begin{table}[ht]
\caption{Quantitative results across peptides of length 4 and 8. All methods evaluated a budget of $10^4$ energy evaluations (top) or $2 \times 10^5$ (bottom). Best values in \textbf{bold}, with second-best \underline{underlined}. }\centering
\label{tab:transferable-main}
\resizebox{1\linewidth}{!}{
\begin{tabular}{@{}lcccccccc}
    \toprule
    \# Residues $\rightarrow$ & \multicolumn3c{4AA \tiny{(30 systems)}} & \multicolumn3c{8AA \tiny{(30 systems)}}  \\
    \cmidrule(lr){2-4}\cmidrule(lr){5-7}
    Model $\downarrow$ & \emetric  & \torusmetric  & \ticametric  &  \emetric & \torusmetric   & \ticametric  \\
    \midrule
    TimeWarp&   7.237 & 2.204 & 0.993 &  --- & --- & --- \\
    BioEmu &  90.079 & 2.037 & 1.479 & 193.873 & 4.638 & 1.601 \\
    UniSim &  $>10^{4}$ & 2.766 & 1.733 & $>10^{3}$ & 6.156 & 1.495 \\
    \midrule
    ECNF++ & 10.032 & 1.121 & 0.572 & --- & --- & --- \\
    TarFlow &  1.260 & 0.924 & 0.492 & 11.298 & 2.733 & 1.087 \\
    Prose & \textbf{0.932} & \textbf{0.752} & \textbf{0.367} & 10.038 & 2.456 & 0.988 \\
    \namemodel (ours) & 1.168 & 0.886 & 0.471 & \textbf{4.251} & \underline{2.325} &  \textbf{0.943} \\
    \namemodel (ours) SMC & \underline{1.079} & \underline{0.874} & \underline{0.463} & \underline{4.263} & \textbf{2.315} & \underline{0.977} \\
    \midrule
    $2 \times 10^5$ evaluations \\
    TarFlow & 0.929 & 0.776 & 0.498 & 10.826 & 2.320 & 1.057 \\
    Prose &  \underline{0.646} & \textbf{0.607} & \textbf{0.349} & \underline{9.360} & \underline{2.019} & \underline{0.960} \\
  \namemodel (ours) & \textbf{0.531} & \underline{0.649} & \underline{0.379} & \textbf{3.615} & \textbf{1.902} & \textbf{0.882}\\
    \bottomrule
    \end{tabular}
    }
\end{table}

\subsection{Transferable Generation}

\looseness=-1
We now introduce \namemodel, a transferable model trained using the \nameshort framework with additional conditioning information---detailed in~\S\ref{app:arch_conditioning}---to allow zero-shot transfer to unseen peptides in the ManyPeptidesMD dataset~\cite{tan2025amortizedsamplingtransferablenormalizing}. We report our results in~\cref{tab:transferable-main}, which contain test set performance metrics averaged over 30 different sequences of length 4 and 8 residues. Empirically, we observe superior performance over the current state-of-the-art method (Prose) on all molecular systems of size 8 with competitive performance on sequences of size 4. 

\xhdr{Twisted SMC} We observe that Twisted SMC outperforms SNIS only marginally. We attribute this to the high quality of the base \namemodel proposal. Since the model has learned the Boltzmann distribution sufficiently well, the SMC correction yields diminishing returns;\ however, the twisted SMC approach confirms that \nameshort is amenable to intermediate steering, paving the way for increased efficiency using early rejection, applying constraints, or guidance in larger or more complex systems where the proposal is less accurate. We find that on 8AA peptides, around 7\% of samples have a final partial energy greater than $100+ \gE_{\min}$, with a 10,000 sample batch, where $\gE_{\min}$ is the empirical minimum energy obtained. For larger systems where the proposal is not as efficient, see~\cref{sec:experiments} for additional analysis.

\looseness=-1
\xhdr{Inference Scaling}
We also investigate the scaling behaviour of inference samples relative to molecular dynamics (MD) and Prose on 8AA in~\cref{fig:scaling} and~\cref{fig:scaling-appendix}. We find that \namemodel performs favourably against Prose and molecular dynamics \textit{achieving the same performance with an order of magnitude fewer samples vs.\ Prose and three orders of magnitude vs.\ MD}  in terms of \torusmetric. Furthermore, \namemodel also outperforms Prose for the same computational budget (\cref{fig:scaling-appendix}, despite operating on dimensions instead of atoms. We provide further analysis and also investigate the non-monotonic behaviour of the \ticametric for MD in \S\ref{sec:scaling}. Finally, we also provide TICA plots which demonstrate the strong zero-shot performance of \namemodel on an unseen octapeptide compared to ground truth MD in~\cref{fig:CGSWHKQR-tica}.

\subsection{Ablations}

\looseness=-1
\xhdr{Performance with Bin Resolution}
As we increase the number of bins used in \nameshort, the granularity of the coordinates generated by the model increases. In~\S\ref{sec:data_preprocessing}, we first discretize the coordinates into a fixed number of bins, then use uniformly sampled noise from each bin to reconstruct these molecules, demonstrating an upper bound on performance for a model with a fixed number of bins. In \cref{fig:temp-ablations-main} and \cref{fig:temperature-ablations}, for the tri-alanine (AL3) single peptide system, we ablate the number of bins, and empirically validate that an increasing bin count monotonically improves performance on the resampled \emetric. 

\begin{figure}[h]
  \centering
  \includegraphics[width=0.49\textwidth]{figures/CGSWHKQR_tica.pdf}
   \caption{TICA plots comparing the true MD distribution against predictions from \namemodel for the octapeptide:\ \texttt{CGSWHKQR}.}
  \label{fig:CGSWHKQR-tica}
  \vspace{-10pt}
\end{figure}

\begin{figure}[!thb]
  \centering
  \includegraphics[width=0.49\textwidth]{figures/w2_vs_temperature_bins.pdf}
   \caption{\textbf{Left}: Performance with increasing bin count for AL3. \textbf{Right}: Optimal temperature on AL3 across \emetric and \torusmetric.}
  \label{fig:temp-ablations-main}
  \vspace{-5pt}
\end{figure}

\looseness=-1
\xhdr{Sampling Temperature}
We perform inference-time ablations over the transformer's sampling temperature to identify conditions that yield optimal generative performance. As shown in~\cref{fig:temperature-ablations}, the energy distribution varies systematically with temperature:\ at low temperatures, probability mass concentrates on high-likelihood (low-energy) modes, which can suppress or entirely miss other modes, while at higher temperatures the distribution flattens, encouraging diversity but potentially over-sampling modes and degrading generative quality. We quantify this trade-off using our resampled metrics via a temperature sweep, which reveals an optimal sampling temperature near $T = 1.02$ for alanine tetrapeptide. We perform equivalent temperature sweeps for all other settings and summarize the optimal temperatures in~\S\ref{sec:optimal_temps}, finding that lower temperatures improve performance on larger molecular systems. 

\cut{
\looseness=-1
\xhdr{De-quantization Strategies}
To convert bins into continuous coordinates, we investigate three de-quantization techniques:\ (1) adding uniform noise to the chosen bin;\ (2) taking the midpoint of the bin;\ and (3) using the training data distribution to determine the most likely position within the bin of a given equilibrium sample. All approaches converge to the same solution as the bin width tends to 0. We found in practice that (3) improves performance marginally on single-peptide systems and (2) slightly improves performance on transferable systems on our metrics.
}

\section{Related Work}

\looseness=-1
\xhdr{Boltzmann Generators} The use of deep generative models in equilibrium sampling was popularized with the introduction of Boltzmann Generators (BGs)~\citep{noe2019boltzmann}. Most subsequent work has focused on refining the NF architecture used in BGs to improve expressivity~\citep{zhai2024normalizing,draxler2024free}, stability~\citep{schopmans_temperature-annealed_2025,vonklitzing2025learningboltzmanngeneratorsconstrained}, and generalization~\citep{klein_transferable_2024}. CNFs offer superior expressivity and easy handling of symmetries~\citep{kohler2020equivariant,klein2023equivariant}, but incur a high computational cost for likelihood evaluation during inference, which is only partially ameliorated with approximate few-step models~\citep{rehman2025falconfewstepaccuratelikelihoods}, architectural constraints~\citep{gloy2025hollowflowefficientsamplelikelihood}, or learned energy functions~\citep{aggarwal2025boltznce,akhoundsadegh2025progressiveinferencetimeannealingdiffusion}.

\looseness=-1
\xhdr{Autoregressive Models on Continuous Spaces} Several works have investigated transformer-based models for generating molecules, including latent space~\citep{murtada2025mdllm1largelanguagemodel} and flow-based~\citep{cheng2025scalableautoregressive3dmolecule,nextstepteam2025nextstep1autoregressiveimagegeneration} approaches. While it is possible to design an autoregressive SE(3) invariant architecture~\citep{gebauer_gschnet_2019}, these have not scaled as well as purely transformer-based methods. Mixture Density Networks~\citep{bishop1994mixture} have been explored for decades as a way to parametrize outputs over continuous spaces while providing exact densities~\citep{razavi2020frmdn}. These have been employed for generating sketches~\citep{ha2017neural}, images~\citep{salimans2017pixelcnn++}, world models~\citep{ha2018world}, and demand forecasting~\citep{li2023xrmdn}, to name a few. AR MDNs have also recently been reintroduced~\citep{tschannen2024givtgenerativeinfinitevocabularytransformers,li2024autoregressiveimagegenerationvector,Billera2024};\ however, all these methods focus on proposal quality, rather than interrogating the use of likelihoods for Boltzmann Generation. Moreover, MDNs have historically been prone to dropping modes in their mixtures~\citep{deng2022deep}, leading to suboptimal performance---as also observed for molecules in~\cref{tab:single-system-main}.

\section{Conclusion}
\looseness=-1
In this work, we introduced \namelong (\nameshort), a novel autoregressive model for Boltzmann Generators that offers a promising alternative to the dominant paradigm of flow-based approaches. In particular, \nameshort offers new tools that circumvent the expressivity and efficiency constraints that hinder discrete flow-based architectures while being more computationally efficient at likelihood estimation than CNFs. Importantly, \nameshort enjoys the same toolkit available to LLMs that comes with feature rich optimizations that enable scaling laws for language, token level-steering, which we demonstrated in the molecular setting for the first time within a Boltzmann Generator framework.

\looseness=-1
\xhdr{Limitations}
While \nameshort enables a drastically different approach to BGs, it comes with a few notable limitations. Firstly, AR models impose a specific ordering over dimensions, while molecules themselves do not possess a natural ordering and as a result, this choice may affect performance, e.g., in small molecules~\citep{cheng2025scalableautoregressive3dmolecule}. Secondly, the use of uniform binning bounds the precision of the model by $\Delta$, which may pose challenges on even larger systems with sharper energy profiles. Finally, flow-based BGs can benefit from the use of informative priors, such as in TFEP~\citep{wirnsberger2020targeted};\ an investigation of which for AR models we leave as a direction for future work.

\section*{Impact Statement}
This paper presents work whose goal is to advance the field of machine learning for scientific applications. There are many potential societal consequences of our work, none of which we feel must be specifically highlighted here.

\section*{Acknowledgements}
The authors would like to thank Benjamin Murrell for planting the seeds of this idea, as well as Luka Mucko, Tolga Birdal, and Matthew Wicker for feedback on an early draft of this work. Danyal Rehman received financial support from the Natural Sciences and Engineering Research Council's (NSERC) Banting Postdoctoral Fellowship under Funding Reference No.\ 198506. The authors acknowledge funding from UNIQUE, CIFAR, NSERC, Intel, and Samsung. The research was enabled in part by computational resources provided by the Digital Research Alliance of Canada (\url{https://alliancecan.ca}), Mila (\url{https://mila.quebec}), Aithyra (\url{https://www.oeaw.ac.at/aithyra}), and NVIDIA.

\bibliography{main}
\bibliographystyle{icml2026}

\newpage\clearpage
\appendix
\onecolumn
\beginsupplement
\section*{Appendix}

\section{Theory}
\label{app:theory}

\begin{mdframed}[style=MyFrame2]
\renewcommand{\proplinebreak}{}%
\propklerror*
\end{mdframed}
\begin{proof}
Fix a dimension $j$ and a context $x_{<j}$. Let $\{b_l\}_{l=1}^L$ be a measurable partition of the
domain of $\tilde{x}_j$ into disjoint bins, each with width $|b_l|=\Delta$, i.e.\
$\bigcup_{l=1}^L b_l$ covers the support of interest and $b_l \cap b_{l'}=\emptyset$ for $l\neq l'$.

Under uniform binning, the autoregressive model's conditional density
is constrained to be piecewise-uniform on bins. Concretely, there exist bin probabilities
$\pi_\theta(b_l | x_{<j}) \ge 0$ with $\sum_{l=1}^L \pi_\theta(b_l | x_{<j})=1$ such that
\begin{equation}
\label{eq:model_piecewise_uniform}
p_\theta(\tilde{x}_j | x_{<j})
=
\sum_{l=1}^L \pi_\theta(b_l | x_{<j}) \, \mathrm{Unif}(\Delta)
=
\sum_{l=1}^L \pi_\theta(b_l | x_{<j}) \, \frac{\mathbf{1}\{\tilde{x}_j \in b_l\}}{\Delta}.
\end{equation}
Equivalently, for $\tilde{x}_j \in b_l$,
\begin{equation}
\label{eq:model_inside_bin}
p_\theta(\tilde{x}_j | x_{<j}) = \frac{\pi_\theta(b_l | x_{<j})}{\Delta}.
\end{equation}

\looseness=-1
By the definition of the true conditionals we have,
\begin{equation}
\KL\!\left(p^\star(\tilde{x}_j | x_{<j}) \,\Vert\, p_\theta(\tilde{x}_j | x_{<j})\right)
=
\int p^\star(\tilde{x}_j | x_{<j})
\log \left(\frac{p^\star(\tilde{x}_j | x_{<j})}{p_\theta(\tilde{x}_j | x_{<j})}\right)\,d\tilde{x}_j.
\end{equation}
\looseness=-1
Since the bins form a disjoint partition, we can split the integral:
\begin{align}
\label{eq:split_bins}
\KL\!\left(p^\star \Vert p_\theta\right)
&=
\sum_{l=1}^L \int_{b_l}
p^\star(\tilde{x}_j | x_{<j})
\log \left(\frac{p^\star(\tilde{x}_j | x_{<j})}{p_\theta(\tilde{x}_j | x_{<j})}\right)\,d\tilde{x}_j.
\end{align}
Now use \eqref{eq:model_inside_bin} inside each bin:
for $\tilde{x}_j\in b_l$, $p_\theta(\tilde{x}_j| x_{<j})=\pi_\theta(b_l| x_{<j})/\Delta$. Therefore
\begin{align}
\label{eq:inside_bin_log_ratio}
\log\left(\frac{p^\star(\tilde{x}_j | x_{<j})}{p_\theta(\tilde{x}_j | x_{<j})}\right)
=
\log p^\star(\tilde{x}_j | x_{<j})
-
\log \pi_\theta(b_l | x_{<j})
+
\log \Delta.
\end{align}
Plugging \eqref{eq:inside_bin_log_ratio} into \eqref{eq:split_bins} yields
\begin{align}
\KL\!\left(p^\star \Vert p_\theta\right)
&=
\sum_{l=1}^L
\int_{b_l} p^\star(\tilde{x}_j | x_{<j})
\left(\log p^\star(\tilde{x}_j | x_{<j}) - \log \pi_\theta(b_l | x_{<j}) + \log \Delta\right)\,d\tilde{x}_j \nonumber \\
&=
\sum_{l=1}^L
\int_{b_l} p^\star(\tilde{x}_j | x_{<j}) \log p^\star(\tilde{x}_j | x_{<j})\,d\tilde{x}_j
\;-\;
\sum_{l=1}^L p^\star(b_l| x_{<j})\log \pi_\theta(b_l | x_{<j})
\;+\;\log \Delta. 
\label{eq:kl_expanded_first}
\end{align}
(Here we used $\int_{b_l} p^\star(\tilde{x}_j | x_{<j})\,d\tilde{x}_j = p^\star(b_l| x_{<j})$
and $\sum_l p^\star(b_l| x_{<j})=1$.)

Now insert and subtract $\log p^\star(b_l| x_{<j})$ to isolate a discrete KL. We can then rewrite the term involving $\log \pi_\theta$ as follows:
\begin{equation}
-\sum_{l=1}^L p^\star(b_l| x_{<j})\log \pi_\theta(b_l| x_{<j})
=
-\sum_{l=1}^L p^\star(b_l| x_{<j})\log p^\star(b_l| x_{<j})
+\sum_{l=1}^L p^\star(b_l| x_{<j})\log \left(\frac{p^\star(b_l| x_{<j})}{\pi_\theta(b_l| x_{<j})}\right).
\end{equation}
The second sum is exactly the KL divergence between the true bin-mass distribution and the model bin distribution:
\begin{equation}
\label{eq:discrete_kl}
\KL\!\left(p^\star(b_l | x_{<j}) \Vert \pi_\theta(b_l| x_{<j})\right)
=
\sum_{l=1}^L p^\star(b_l| x_{<j})\log\left(\frac{p^\star(b_l| x_{<j})}{\pi_\theta(b_l| x_{<j})}\right).
\end{equation}
Substituting back into \eqref{eq:kl_expanded_first} gives
\begin{align}
\label{eq:kl_discrete_plus_shape}
\KL\!\left(p^\star \Vert p_\theta\right)
&=
\KL\!\left(p^\star(b_l | x_{<j}) \Vert \pi_\theta(b_l| x_{<j})\right)
+
\sum_{l=1}^L \left[
\int_{b_l} p^\star(\tilde{x}_j | x_{<j}) \log\left(\frac{p^\star(\tilde{x}_j | x_{<j})}{p^\star(b_l| x_{<j})/\Delta}\right)\,d\tilde{x}_j
\right].
\end{align}
To interpret the second term, observe that for $\tilde{x}_j\in b_l$,
\[
\frac{p^\star(b_l| x_{<j})}{\Delta} = p^\star(b_l| x_{<j}) \cdot \mathrm{Unif}(\Delta).
\]
Moreover, using the true conditional density,
\[
p^\star(\tilde{x}_j | \tilde{x}_j\in b_l, x_{<j})
=
\frac{p^\star(\tilde{x}_j | x_{<j})}{p^\star(b_l| x_{<j})}\mathbf{1}\{\tilde{x}_j\in b_l\}.
\]
Hence, we can rewrite the bracketed integral as
\begin{align}
\int_{b_l} p^\star(\tilde{x}_j | x_{<j})
\log \left(\frac{p^\star(\tilde{x}_j | x_{<j})}{p^\star(b_l| x_{<j})/\Delta}\right)\,d\tilde{x}_j
&=
p^\star(b_l| x_{<j})
\int_{b_l} p^\star(\tilde{x}_j | \tilde{x}_j\in b_l, x_{<j})
\log \left(\frac{p^\star(\tilde{x}_j | \tilde{x}_j\in b_l, x_{<j})}{\mathrm{Unif}(\Delta)}\right)\,d\tilde{x}_j \nonumber \\
&=
p^\star(b_l| x_{<j})
\KL\!\left(
p^\star(\tilde{x}_j | \tilde{x}_j\in b_l, x_{<j})
\Vert
\mathrm{Unif}(\Delta)
\right).
\label{eq:within_bin_kl}
\end{align}
Combining \eqref{eq:kl_discrete_plus_shape} and \eqref{eq:within_bin_kl}, we obtain the exact decomposition
\begin{equation}
\label{eq:exact_decomposition}
\KL\!\left(p^\star(\tilde{x}_j | x_{<j}) \Vert p_\theta(\tilde{x}_j | x_{<j})\right)
=
\KL\!\left(p^\star(b_l | x_{<j}) \Vert \pi_\theta(b_l| x_{<j})\right)
+
\sum_{l=1}^L p^\star(b_l | x_{<j})
\KL\!\left(
p^\star(\tilde{x}_j | \tilde{x}_j\in b_l, x_{<j})
\Vert
\mathrm{Unif}(\Delta)
\right).
\end{equation}

\looseness=-1
The second term in \eqref{eq:exact_decomposition} depends only on the true distribution $p^\star$ and the fixed bins,
and is independent of $\theta$. The first term is a KL divergence over discrete distributions, hence it is always nonnegative and is minimized if and only if
\[
\pi_\theta(b_l| x_{<j}) = p^\star(b_l| x_{<j})\quad \text{for all }l.
\]
At this optimum, the discrete KL equals zero, so the minimum achievable KL within the piecewise-uniform model family is
\[
\inf_\theta \KL\!\left(p^\star(\tilde{x}_j | x_{<j}) \Vert p_\theta(\tilde{x}_j | x_{<j})\right)
=
\sum_{l=1}^L p^\star(b_l | x_{<j})
\KL\!\left(
p^\star(\tilde{x}_j | \tilde{x}_j\in b_l, x_{<j})
\Vert
\mathrm{Unif}(\Delta)
\right).
\]
As a result, this matches the statement of the proposition.
\end{proof}

\section{Data Preprocessing and Analysis}
\label{sec:data_preprocessing}\looseness=-1
We analyze the coordinate distribution of each peptide by placing the training data into a fixed number of bins (set to \texttt{num\_bins} $= 1024$ for simplicity) in~\cref{fig:bin-count-by-coords}. 

\begin{figure*}[!h]
  \centering
  \includegraphics[width=\textwidth]{figures/bin_count_by_coordinate_dims.pdf}
  \caption{The distribution of coordinates across bins for fixed bin count with \texttt{num\_bins} = 1024.}
  \label{fig:bin-count-by-coords}
\end{figure*}

In addition to analyzing the distribution of samples across discretized bins, we evaluated a lower bound on \emetric and \torusmetric by de-quantizing via uniform noise injection (see~\cref{fig:bin-count-analysis}). Specifically, training data were discretized into bins and subsequently mapped back to continuous coordinates by reconstructing via uniformly sampled noise. This procedure was repeated over a range of bin sizes to characterize the bin-width-dependent lower bound on attainable performance across all alanine-based peptides. We further conducted ablations to assess the effect of bin size on \emetric and \torusmetric. As shown in~\cref{fig:bin-ablations}, performance exhibits a near-monotonic dependence on bin resolution, validating our original analysis. Based on these results, we used 4096 bins for alanine dipeptide and tri-alanine, and 8192 bins for larger molecules and \namemodel.

\begin{figure*}[!ht]
  \centering
  \includegraphics[width=\textwidth]{figures/bin_count_analysis_w_molecules_long.pdf}
  \caption{Energy distributions on the training data as a function of bin discretization. The corresponding \emetric and \torusmetric values are reported for each discretization, demonstrating an upper bound on the learnability of these metrics.}
  \label{fig:bin-count-analysis}
\end{figure*}

\begin{figure*}[!h]
  \centering
  \includegraphics[width=0.6\textwidth]{figures/performance_vs_bins.pdf}
   \caption{We vary the model's bin count and evaluate its impact on the resampled \emetric and \torusmetric for tri-alanine.}
  \label{fig:bin-ablations}
\end{figure*}

\section{Metrics}
Below, we introduce the metrics used to evaluate model performance and describe their computation. The proposed metrics capture both local and global behaviour. Energy-based metrics assess the accuracy of local interactions, as small geometric perturbations can induce large energy variations. Complementary global metrics--including torus- and TICA-based measures--evaluate mode coverage and the ability of models to capture multi-modal structure. We omit the effective sample size (ESS), as its interpretation is invalidated by the use of SMC.

\subsection{Main Geometric Metrics}
\looseness=-1
\paragraph{2-Wasserstein Energy Distance (\emetric)} To quantify the agreement between generated and reference energy distributions, we compute the squared 2-Wasserstein distance between the energies of generated samples and those obtained from MD. Let $p, q \in \mathcal{P}(\mathbb{R})$ denote the probability distributions over energy values for the generated and reference samples, respectively, and let $\Pi(p,q)$ denote the set of admissible couplings between them. The Wasserstein energy distance is then defined as:
\begin{equation}
\gE\text{-}\gW_2(p,q)^2
\;\triangleq\;
\min_{\pi \in \Pi(p,q)}
\int_{\mathbb{R} \times \mathbb{R}} |x - y|^2 \, \mathrm{d}\pi(x,y).
\end{equation}

This metric measures how closely the generated energy landscape matches the reference distribution. Because molecular energies are highly sensitive to local structural change such as bond lengths/angles, \emetric is particularly effective at detecting physically relevant discrepancies. Lower values correspond to better agreement with the target Boltzmann distribution.

\looseness=-1
\paragraph{Torus 2-Wasserstein Distance (\torusmetric)} To assess structural similarity in torsional space, we compute a 2-Wasserstein distance defined on the torus. For a molecule with $L \in \mathbb{N}$ residues, each conformation is represented by its vector of dihedral angles:
\begin{equation}
\mathrm{Dihedrals}(x)
=
(\phi_1, \psi_1, \ldots, \phi_{L-1}, \psi_{L-1})
\in [0,2\pi)^{2(L-1)}.
\end{equation}

To account for the periodicity of angular variables, the squared cost between two conformations $x$ and $y$ is defined as:
\begin{equation}
c_{\mathbb{T}}(x,y)^2
=
\sum_{i=1}^{2(L-1)}
\left[
\big(
\mathrm{Dihedrals}(x)_i
-
\mathrm{Dihedrals}(y)_i
+
\pi
\big)
\bmod 2\pi
-
\pi
\right]^2.
\end{equation}

The corresponding torus Wasserstein distance between two distributions
$p, q \in \mathcal{P}([0,2\pi)^{2(L-1)})$ is then defined as:
\begin{equation}
\gT\text{-}\gW_2(p,q)^2
\;\triangleq\;
\min_{\pi \in \Pi(p,q)}
\int c_{\mathbb{T}}(x,y)^2 \, \mathrm{d}\pi(x,y).
\end{equation}

This metric captures global conformational differences in torsional space while respecting angular periodicity. Unlike energy-based distances, \torusmetric is sensitive to missing or misrepresented conformational modes, providing a complementary assessment of structural diversity and coverage in generative Boltzmann models. One point of note is that although this claim generally holds, in cases where there are few samples from a given mode that are lost, this does not substantially impact the \torusmetric, meaning that we can see a reduced value even in the presence of mode loss---one clear example of this phenomenon is presented in~\cref{sec:alaninedipeptide_results}, where we demonstrate mode collapse despite decreasing \torusmetric.

\looseness=-1
\paragraph{TICA 2-Wasserstein Distance (\ticametric)}
To compare the long-timescale dynamical structure of trajectories, we evaluate discrepancies in a reduced space defined by time-lagged independent component analysis (TICA). TICA identifies collective coordinates that maximize autocorrelation, isolating the slow modes governing conformational dynamics.

Given a mean-centered time series $\{\tilde{x}_t\}_{t=1}^T \subset \mathbb{R}^n$ and a lag time $\tau$, we estimate the empirical covariance matrices:
\begin{equation}
\hat{C}_{00}
=
\frac{1}{T-\tau}\sum_{t=1}^{T-\tau} \tilde{x}_t \tilde{x}_t^\top,
\qquad
\hat{C}_{0\tau}
=
\frac{1}{T-\tau}\sum_{t=1}^{T-\tau} \tilde{x}_t \tilde{x}_{t+\tau}^\top.
\end{equation}
The dominant slow modes are obtained by solving the generalized eigenvalue problem
\begin{equation}
\hat{C}_{0\tau} w = \lambda \hat{C}_{00} w,
\end{equation}
where each eigenvector $w$ defines a linear projection with maximal normalized autocorrelation at lag $\tau$. In practice, we retain the first two TICA components $\{w_1,w_2\}$, which capture the slowest dynamical processes.

Using these projections, we define an $\ell_2$ cost between configurations $x,y \in \mathbb{R}^n$ as their Euclidean distance in TICA space:
\begin{equation}
c_{\textsc{TICA}}(x,y)^2
=
\sum_{j=1}^{2} \left( w_j^\top x - w_j^\top y \right)^2.
\end{equation}
The corresponding TICA Wasserstein distance between generated and reference distributions $p,q \in \mathcal{P}(\mathbb{R}^n)$ is then
\begin{equation}
\mathrm{TICA}\text{-}\mathcal{W}_2(p,q)^2
\;\triangleq\;
\min_{\pi \in \Pi(p,q)}
\int c_{\textsc{TICA}}(x,y)^2 \, \mathrm{d}\pi(x,y).
\end{equation}

This metric directly assesses agreement in the slow dynamical subspace learned from the reference trajectory. By construction, \ticametric is sensitive to mismatches in metastable state populations and transition pathways, making it well-suited for evaluating models intended to reproduce long-timescale molecular kinetics.

\subsection{On the use of Geometric over Likelihood-based metrics in High-dimensions}
\label{app:ess_critique}
\looseness=-1
In this work, we prioritize Wasserstein-based metrics (TICA, Torus, Energy) over likelihood-based metrics. While ESS is a standard diagnostic for the efficiency of SNIS estimators, it is widely recognized as a potentially misleading proxy for sample quality in high-dimensional spaces. In high-dimensional spaces, importance sampling is susceptible to the ``curse of dimensionality'', referred to as \textit{weight collapse} in the particle filtering literature~\citep{ObstaclestoHighDimensionalParticleFiltering}. As the dimensionality increases, the overlap between the typical sets of the proposal and target distributions vanishes exponentially. Consequently, the variance of the importance weights becomes dominated by rare samples that land in the small region of overlap. This results in an estimator variance that explodes, rendering ESS an unreliable metric for performance in systems with hundreds of degrees of freedom (e.g., Decapeptides), as the metric becomes sensitive to global scaling factors rather than local mode coverage.

\paragraph{The Bias Toward Mode Collapse}
The core limitation of the effective sample size (ESS) in the context of Boltzmann Generation is its tendency to reward ``mode-seeking'' behaviour over ``mass-covering'' behaviour. ESS is derived from the variance of the importance weights $w_i(x) = \mu_{\text{target}}(x_i)/p_{\theta}(x_i)$. Specifically we use the \textit{normalized} effective sample size that ranges between $[1/N,1]$,
\begin{equation*}
\text{ESS}\left(\{w_i\}_{i=1}^N\right) = \frac{\left( \sum_{i=1}^{N} w_i \right)^2}{N\sum_{i=1}^{N} w_i^2},
\end{equation*}

where $x_i \sim p_\theta$. A generative model $p_{\theta}$ can maximize ESS by collapsing its probability mass into a single, highly stable metastable state (a single mode of $\mu_{\text{target}}$). In this scenario, the ratio $\mu_{\text{target}}(x)/p_{\theta}(x)$ remains stable within that specific region, yielding a high ESS;\ however, this comes at the cost of failing to sample other metastable states (mode dropping). In~\cref{fig:al3_collapsed_mode}, we compare the Ramachandran plots between the ground truth MD data, FALCON, and \nameshort for one of the torsion angles present in tri-alanine. It can clearly be seen that the mode between 0 and $\frac{\pi}{2}$ exists in the training data, while being lost in FALCON---a model that obtains a higher ESS than \nameshort.

\looseness=-1
Conversely, a model that attempts to cover the full diversity of the Boltzmann distribution (``mass-covering'') is much more likely to assign non-zero probability to high-energy regions where $\mu_{\text{target}}(x) \approx 0$. This results in high variance of the importance weights and a low ESS, despite the model being superior in terms of exploring the global conformational space. 

\begin{figure*}[!h]
  \centering
  \includegraphics[width=0.7\textwidth]{figures/al3_falcon_comparison.pdf}
  \caption{\textbf{Left}: Ramachandran plot from the ground truth MD data for tri-alanine;\ \textbf{Center}:\ FALCON's torsion angle predictions;\ \textbf{Right}:\ Torsion angle predictions from \nameshort. FALCON clearly loses one of the conformational modes at inference.}
  \label{fig:al3_collapsed_mode}
\end{figure*}

\looseness=-1
\paragraph{On the Interaction with Numerical Error in Practice} In practice, with models that have some numerical error, the variance of importance weights---and therefore ESS---is often dominated by numerical errors where a small fraction of samples will have an unusually high likelihood. This causes the creation of a large importance weight, which is why in practice, all models use a form of clipping to ensure reasonable importance weight values and to prevent collapse. In this work, we use a clipping value of $0.002$ where the samples with the largest $0.002$ fraction of importance weights are clipped following prior work~\citep{klein2023equivariant,midgley_se3_2023,tan2025amortizedsamplingtransferablenormalizing,rehman2025falconfewstepaccuratelikelihoods}. In practice, ESS is highly sensitive to numerical precision and is often dominated by outliers, particularly in high-dimensional settings. By contrast, geometry-based metrics are substantially more robust to such numerical effects and provide a more reliable characterization of global model behaviour.

\section{Additional Results}
\subsection{Temperature Tuning}\label{sec:optimal_temps}
Sampling temperature controls the entropy of the model distribution by scaling logits at inference time. The optimal temperatures for all systems are reported in~\cref{tab:optimal_temperatures}. For smaller systems, temperatures near 1.0 are sufficient, with slight gains observed for values marginally above 1.0. In contrast, larger and more complex systems benefit from lower temperatures, which likely mitigate underfitting.

\begin{table}[!h]
\centering
\caption{Optimal sampling temperatures identified via inference-time temperature sweeps across molecular systems.}
\label{tab:optimal_temperatures}
\resizebox{0.35\linewidth}{!}{
\begin{tabular}{lc}
\toprule
System & Optimal Temperature, $T$ \\
\midrule
Alanine Dipeptide        & 1.03 \\
Tri-alanine              & 0.99 \\
Alanine Tetrapeptide     & 1.02 \\
Hexa-alanine             & 0.98 \\
Chignolin                & 0.88 \\
\midrule
\namemodel (Transferable)     & 0.95 \\
\bottomrule
\end{tabular}}
\end{table}

\begin{figure*}[t]
  \centering
  \includegraphics[width=\textwidth]{figures/temperature_ablation.pdf}
  \vspace{-10pt}
   \caption{Ablations on model temperature. For the energy distribution, we demonstrate that lower temperatures sample lower energy modes more frequently, while the converse holds for higher temperatures. We also show how the modes become more prominent at high temperatures and are lost at lower temperatures. Finally, we show how an optimal temperature exists for optimizing \emetric and \torusmetric.}
  \label{fig:temperature-ablations}
\end{figure*}

\paragraph{Temperature on Alanine Tetrapeptide} In~\cref{fig:temperature-ablations}, we study the effects of temperature tuning on the proposal and re-weighted distributions on the alanine tetrapeptide system. First, as noted in the main text, we find that the optimal temperature is slightly higher than 1.0. This is interesting and in line with prior works that find a slightly more diffuse proposal may be slightly better for Boltzmann Generation metrics, as it allows better coverage of the space.

\begin{table}[h]
    \centering
    \caption{Inference speed in samples per second for best performing models in the transferable setting. \namemodel is around 50\% faster than Prose per model evaluation, but is slower in terms of samples per second due to operating over dimensions instead of atom coordinates and therefore requires $3\times$ the model evaluations.}
    \label{tab:inference-speed}
    \begin{tabular}{l r r r}
    \toprule
        & 2AA & 4AA & 8AA \\
        \midrule
         TarFlow & 737 & 329 & 126 \\
         PROSE & 338 & 158 & 66 \\
         \namemodel & 260 & 87 & 29\\
         \bottomrule
    \end{tabular}
\end{table}

\subsection{Inference Time}
In~\cref{tab:inference-speed}, we report inference throughput (samples per second) for transferable models with 2, 4, and 8 residues, averaged over 30 systems. While \namemodel is slower than Prose, its substantially higher sample quality yields superior performance under a fixed sampling budget (see~\cref{fig:scaling}).

\subsection{Alanine Dipeptide}
\label{sec:alaninedipeptide_results}
Below, we summarize the results of all \nameshort variants in conjunction with other baselines on ALDP. \nameshort outperforms all competing models---including both discrete flows and CNFs, on \emetric, with competitive performance on \torusmetric. 

\begin{table}[!ht]
\centering
\caption{Results on alanine dipeptide. Best results are \textbf{bolded}, with second-best \underline{underlined}. }
\label{tab:aldp_results}
\begin{tabular}{lcc}
    \toprule
    & \multicolumn{2}{c}{Alanine dipeptide (ALDP)} \\
    \cmidrule(lr){2-3}
    Algorithm $\downarrow$ & \emetric $\downarrow$ & \torusmetric $\downarrow$ \\
    \midrule
    BoltzNCE & 0.27 $\pm$ 0.02 & 0.57 $\pm$ 0.00 \\
    SE$(3)$-EACF & 108.202 & 2.867 \\
    ECNF & 0.419 & 0.311 \\
    RegFlow & 0.501 $\pm$ 0.011 & 0.951 $\pm$ 0.054 \\
    ECNF++  & 	0.914 $\pm$ 0.122& 0.189 $\pm$ 0.019 \\
    SBG  &  0.741 $\pm$ 0.189 & 0.431 $\pm$ 0.141 \\
    FALCON-A & {0.512 $\pm$ 0.038} & \underline{0.180 $\pm$ 0.005} \\
    FALCON &  \underline{0.225 $\pm$ 0.104} & 0.402 $\pm$ 0.021 \\
    \midrule
    GIVT &  0.256 $\pm$ 0.033 & \textbf{0.175 $\pm$ 0.171} \\
    MoL-PixelCNN++ & 1.447 $\pm$ 0.277 & 0.528 $\pm$ 0.028 \\
    GMM-PixelCNN++ & 0.763 $\pm$ 0.118 & 0.354 $\pm$ 0.098 \\

    \midrule
    \shortname & \textbf{0.209 $\pm$ 0.041} & 0.402 $\pm$ 0.008\\
    \bottomrule
\end{tabular}
\vspace{0mm}
\end{table}

\looseness=-1
\paragraph{Mode Collapse} When training models on the alanine dipeptide dataset from \citet{klein2023equivariant}, we observe that we can continue improving our performance across both global and local metrics if we train our models for longer;\ however, in this process, part of the performance improvement comes from losing the mode, which artificially inflates ESS. Torus, which is designed to be a global metric, also suffers given that there are an insufficient number of points in that mode to radically impact the degradation of performance, yielding a nearly monotonic trend in performance improvement as training time increases. For larger systems, like tri-alanine and above, this behaviour is not observed.

In~\cref{fig:mode_loss}, we provide a clear demonstration of the lost mode on ALDP. The Ramachandran plots are shown for two different instances in the training process---Epoch 110 and Epoch 370. We show that earlier in training, the mode exists, but as training continues, it disappears. This can also be observed on the training loss curve as annotated. We also provide the ESS and Torus results during training to demonstrate that the loss of the mode improves performance on metrics. 
\begin{figure*}[t]
  \centering
  \includegraphics[width=\textwidth]{figures/mode_collapse.pdf}
  \caption{We demonstrate that training models for too long leads to overfitting on the training data, which despite improving resampled metrics, yields undesirable behaviour.}
  \label{fig:mode_loss}
\end{figure*}

We believe this stems from the method used to generate the dataset in \citet{klein2023equivariant}. Specifically, the dataset was generated in the following way:
\begin{enumerate}[topsep=0pt, partopsep=0pt, itemsep=3pt, parsep=0pt, leftmargin=*]
    \item MD simulation using \textit{Amberff99SBildn} force-field at 300K for 1 ms using \texttt{openMM}~\cite{eastman2024openmm} with a timestep of 1 femto-second. 
    \item Relaxation of $10^5$ uniformly randomly selected states from the MD data for $100$ femto-seconds each using the \textit{GFN2-xTB} forcefield \cite{Bannwarth2018GFN2xTBAnAA} and the ASE library \cite{Hjorth_Larsen_2017} with a friction constant of 0.5 a.u.
    \item To make the density nearly equal between negative and positive $\varphi$ dihedral angles, importance sampling is performed using weights from a von Mises distribution $f_{vM}$. Specifically, weights for each sample are computed as:
    \begin{equation}
        \omega(\varphi) = 150 f_{vM}(\varphi | \mu=1, \kappa=10) + 1
    \end{equation}
    with $10^5$ training samples drawn from the weighted distribution.
\end{enumerate}
Specifically, this final reweighting step makes it possible for powerful models to overfit on the positive $\varphi$ mode. The reweighting step causes there to be multiple instances of exactly the same data sample in the training set. For powerful models, seeing the same datapoint multiple times (even with data augmentation) causes overfitting. We observe that the likelihood of these exact training samples explodes, causing the distribution after importance sampling to remove the positive $\varphi$ mode.

This mixed energy function usage creates somewhat of a problem for importance sampling. In practice, following previous work, use the \textit{Amberff99SBildn} force-field at 300K as a target energy function for reweighting, but note that this is not quite a perfect fit as the samples are relaxed slightly with the \textit{GFN2-xTB} forcefield which may create a slight mismatch between the target distribution and the actual \textit{Amberff99SBildn}-defined equilibrium distribution. 

\paragraph{Recommendation} For newer and more powerful Boltzmann Generator models, we recommend using training sets without importance sampling, as these are much more difficult to overfit on specific training samples. It is important to be mindful of overfitting-type behaviour on these small datasets with relatively powerful models.

\subsection{Ramachandran Plots for Other Single Peptide Systems}
Here, we demonstrate the competitive performance of \nameshort across single peptide systems by showing the Ramachandran plots for all systems considered. In all cases considered, \nameshort captures nearly every mode present in the test data, clearly illustrating the quality of the learned likelihoods and their synergy with SNIS. 
\begin{figure*}[!ht]
  \centering
  \includegraphics[width=0.35\textwidth]{figures/aldp_ramachandran.pdf}
  \caption{\textbf{Left}: Test data for alanine dipeptide; \textbf{Right}:\ \nameshort's angular predictions for alanine dipeptide.}
  \label{fig:aldp-rama}
\end{figure*}
\begin{figure*}[!ht]
  \centering
  \includegraphics[width=0.65\textwidth]{figures/al3_ramachandran.pdf}
  \caption{\textbf{Left}: Test data for tri-alanine;\ \textbf{Right}:\ \nameshort's angular predictions for tri-alanine.}
  \label{fig:al3-rama}
\end{figure*}
\begin{figure*}[!ht]
  \centering
  \includegraphics[width=\textwidth]{figures/al4_ramachandran.pdf}
  \caption{\textbf{Left}: Test data for alanine tetrapeptide;\ \textbf{Right}:\ \nameshort's angular predictions for alanine tetrapeptide.}
  \label{fig:al4-rama}
\end{figure*}
\begin{figure*}[!ht]
  \centering
  \includegraphics[width=\textwidth]{figures/al6_ramachandran.pdf}
  \caption{\textbf{Left}: Test data for hexa-alanine;\ \textbf{Right}:\ \nameshort's angular predictions for hexa-alanine.}
  \label{fig:al6-rama}
\end{figure*}

\subsection{De-quantization Strategies}
For our discrete model to generate continuous coordinates, we explored three different sampling strategies:\
\begin{enumerate}[topsep=0pt, partopsep=0pt, itemsep=3pt, parsep=0pt, leftmargin=*]
    \item Sampling uniformly from the discrete bin selected by the model. This is our reasonable default choice and defines a piecewise-constant continuous density in $\R^d$. 
    \item Using the center value of the bin. This strategy reduces a bit of variability from generation and may make bond lengths slightly more uniform. With enough bins, this is not necessary. Empirically, we observed small benefits for this de-quantization strategy over uniformly sampling from the bin.
    \item Using the biased training data distribution for the chosen molecule to determine empirical offsets that we apply to the chosen bin at inference time. Especially for larger bin sizes, we may be able to fit more interesting distributions concerning the training dataset within the bin. Here, we use a Dirac distribution over the empirical mean of the training dataset within the bin. We find this gives a small boost in performance, especially when the training data is centered.
\end{enumerate}

\subsection{Transferable Generation}
\paragraph{Autoregressive Twisted SMC Efficiency Benefits} 
We evaluate the efficiency gains that can be obtained using our Autoregressive Twisted SMC algorithm. The main idea is that it is easy to detect samples that will not result in valid low energy samples early on during inference. Using the twist function defined in \eqref{eqn:twist_smc}, we are able to essentially stop inference early for any sample that exhibits a high partial energy. In~\cref{fig:smc-partial}, we investigate the partial energy distributions on the sequence \texttt{SQQKVAFE} 8AA test set peptide for \namemodel. To investigate this, we perform SMC inference without resampling for 10,000 generations. We record the partial energy of each sample at each residue checkpoint. We find that residue 2 has around 2\% of samples that have poor energy samples while residue 7 has $>7$\% of samples that have high energies and represent likely steric clashes or other high energy features. These samples represent ``wasted'' compute, in that they will not contribute to the final distribution of samples. Therefore, additional efficiency can be gained by filtering these out early. On this peptide, using the $min(\mathcal{E}(x)) + 100$ filter on energy at the earliest time a sample is registered as high energy, we find a savings of roughly 3\% over a method without intermediate resampling.

While this is a relatively minor saving, we expect that for more complicated and larger systems where the proposal has more failure modes, the advantage of autoregressive twisted SMC here would increase substantially in terms of cost savings or sampling efficiency, depending on the exact method and utilization of this concept.

\begin{figure}[!h]
    \centering
    \includegraphics[width=0.7\linewidth]{figures/partial_energy_distribution.pdf}
    \caption{Histogram of the energy values for 10,000 samples on \texttt{SQQKVAFE} for each residue, where we perform resampling for SMC. We clip the maximum energy to $\min(\mathcal{E}(x)) + 100$. The spikes represent all values greater than or equal to that histogram value. We can see that by residue 7, $>7$\% of samples have extremely large energies, which likely represent clashes or erroneous bond lengths.}
    \label{fig:smc-partial}
\end{figure}

\paragraph{TICA Plots for Unseen Octapeptides}
To demonstrate the quality of \namemodel and its zero-shot performance, we include TICA plots for seven different unseen octapeptides in~\cref{fig:unseen_octapeptides}. In this process, we show the predictive capacity of \namemodel as it nearly perfectly captures all the modes of these unseen molecules. 

\begin{figure*}[!ht]
  \centering
  \includegraphics[width=\textwidth]{figures/combined_tica_2row.pdf}
  \caption{TICA modes using \namemodel on seven unseen octapeptides (from left to right):\ \texttt{PPWRECNN}, \texttt{PLFHVMYV}, \texttt{NPCLCYML}, \texttt{IFGWVYTG}, \texttt{YFPHAGYT}, \texttt{ISKCKNGE}, \texttt{DGVAHALS}.}
  \label{fig:unseen_octapeptides}
\end{figure*}

\paragraph{Peptide-level Performance and Learnability}
We evaluate the learned model on each peptide in the test set and report the resampled \emetric and \torusmetric in~\cref{fig:perpeptide-results,fig:perpeptide-results-torus} for \namemodel, Prose, and TarFlow. Overall, performance is broadly comparable across models on a per-sequence basis; however, several peptides remain challenging for all methods. For example, all models perform poorly on the \emetric for \texttt{KRRGFFLE}. Further analysis indicates that sequences with substantial charge contributions are particularly difficult to learn, especially those containing \texttt{R} and \texttt{Y} residues. Although all amino acids incur steep energy penalties outside favourable conformations, certain side chains are exceptionally sensitive to small geometric perturbations. Arginine's planar, highly charged guanidinium group exhibits strongly orientation-dependent electrostatics, while tyrosine's aromatic ring engages in highly directional non-bonded interactions~\citep{yoo2016improved,jing2019polarizable}. Consequently, minor deviations in side-chain geometry can produce large energy fluctuations, complicating the learning of equilibrium conformational distributions for these residues.

\begin{figure*}[!h]
  \centering
  \includegraphics[width=0.9\textwidth]{figures/test_sequence_energy.pdf}
  \caption{The resampled \emetric across peptides when comparing \namemodel, Prose, and SBG. Models were evaluated using  $10^4$ samples.}
  \label{fig:perpeptide-results}
\end{figure*}

\begin{figure*}[!h]
  \centering
  \includegraphics[width=0.9\textwidth]{figures/test_sequence_torus.pdf}
  \caption{The resampled \torusmetric across peptides when comparing \namemodel, Prose, and SBG. Models were evaluated using  $10^4$ samples.}
  \label{fig:perpeptide-results-torus}
\end{figure*}

\subsection{Inference Scaling}\label{sec:scaling}

In~\cref{fig:scaling}, we demonstrated the scaling performance of \namemodel against Prose and MD on unseen octapeptide systems in terms of the number of function and energy evaluations. In this section, we continue an investigation into the inference scaling behaviour of \namemodel on octapeptides.

\paragraph{GPU hour performance} In~\cref{fig:scaling-appendix}, we investigate not only an equal number of energy evaluations, but also the scaling performance in terms of GPU hours. While \namemodel is slower than Prose or MD per model or energy function evaluation, its performance for the same number of GPU hours is better, especially on the \torusmetric.

Here, and in~\cref{fig:scaling}, we notice an unexpected trend in the \ticametric plots. Specifically, the \ticametric is relatively flat for MD, then spikes at around $10^7$-$10^8$ energy evaluations. We investigate this further by breaking the performance out sequence by sequence in \cref{fig:perpeptide-scaling-1,fig:perpeptide-scaling-2,fig:perpeptide-scaling-3}, where we show all 30 test set eight residue sequences. We notice a few interesting things, as stated in the following:\
\begin{enumerate}[topsep=0pt, partopsep=0pt, itemsep=3pt, parsep=0pt, leftmargin=*]
    \item As noted in \cref{fig:perpeptide-results-torus,fig:perpeptide-results}, the variability between sequences is quite high. Often, the performance is non-monotonic. For some sequences, one model is better than another, particularly at low energy evaluations;\ however, as the number of energy evaluations grows, \namemodel generally outperforms Prose.
    \item \ticametric for MD often has extremely non-monotonic elements often after $10^7$ energy evaluations due to mode jumps. Specifically, it takes around $10^7$ MD steps for chains to jump to the next mode. This often drastically changes \ticametric as the relative weights between modes evolve, especially when the new mode has less free energy than the starting mode. We demonstrate this clearly in~\cref{fig:tica-scaling}, where we see the MD over-sample a mode that is incorrectly weighted, leading to significant spikes in the \ticametric.
\end{enumerate}
\looseness=-1
We note that these plots are compared to the test set trajectories which have been run for 50 times as long as the longest MD chain. Given that we see the first mode mixing events at around $10^7$ energy evaluations, this implies that the test chains may not be fully mixed.

An interesting difference between MD and BG traces is that BG traces are often more monotonic than the MD trajectories. This is reasonable as BGs (both Prose and \namemodel) sample independently from their proposal where MD is autocorrelated. This means that the performance is often significantly better for BG type models in the few-step regime for unseen peptides. \namemodel outperforms all others on 8 residue systems on average.

\begin{figure*}[!h]
  \centering
  \includegraphics[width=0.75\textwidth]{figures/combined_tica_scaling_ref_and_budgets.pdf}
  \caption{How the \ticametric varies as a function of energy evaluations across MD, Prose, and \namemodel. In addition, in the bottom row we see how the MD simulation slowly discovers modes as more energy evaluations are performed;\ in this process, it often searches in incorrect regions, amplifying the \ticametric, and then recovering from it with additional samples.}
  \label{fig:tica-scaling}
\end{figure*}

\begin{figure*}[!h]
  \centering
  \includegraphics[width=\textwidth]{figures/energy_vs_gpu_hours_metrics.pdf}
  \caption{\emetric, \torusmetric, and \ticametric against the number of energy evaluations and GPU hours for MD, Prose, and \namemodel.}
  \label{fig:scaling-appendix}
\end{figure*}

\newpage\clearpage
\begin{figure*}[h]
  \centering
  \includegraphics[width=\textwidth]{figures/energy_scaling_sequences_01.pdf}
  \caption{\emetric, \torusmetric, \ticametric per peptide vs.\ Energy Evaluations.}
  \label{fig:perpeptide-scaling-1}
\end{figure*}

\begin{figure*}[h]
  \centering
  \includegraphics[width=\textwidth]{figures/energy_scaling_sequences_02.pdf}
  \caption{\emetric, \torusmetric, \ticametric per peptide  vs.\ Energy Evaluations.}
  \label{fig:perpeptide-scaling-2}
\end{figure*}

\begin{figure*}[h]
  \centering
  \includegraphics[width=\textwidth]{figures/energy_scaling_sequences_03.pdf}
  \caption{\emetric, \torusmetric, \ticametric per peptide  vs.\ Energy Evaluations.}
  \label{fig:perpeptide-scaling-3}
\end{figure*}

\newpage\clearpage

\section{Experimental Configurations}

\subsection{Architecture} The architecture choice employed follows a standard but performant recipe of Transformer-based building blocks. In~\cref{fig:architecture}, we capture the model specifications within a Transformer Block, which models the conditional distribution $p_{\theta}(x_j| x_{<j})$ and is composed of causal self-attention, RMSNorm~\citep{zhang2019rootmeansquarelayer}, and SwiGLU activations~\citep{shazeer2020gluvariantsimprovetransformer}. The two main differences between the set of single peptide experiments and the transferable setting were:\ (1) the scale of the model;\ and (2) the conditioning, which we cover in detail below. Unique to the molecular setting, we include an additional source of conditioning information through embeddings of the atom type and residue types that are injected into the main transformer block. We discuss the details behind all the enhancements and best practices below.

\begin{figure}[!h]
  \centering
  \includegraphics[width=0.45\textwidth]{figures/architecture.pdf}
  \caption{The transformer-based architecture variants that were considered. \textbf{Left}: The decoder-only like architecture that takes the conditioning information in at the initial generation step;\ \textbf{Right}: The encoder-decoder like architecture that repeatedly has a cross attention block that interacts with every transformer layer.}
  \label{fig:architecture}
\end{figure}

\subsection{Conditioning}
\label{app:arch_conditioning}
\looseness=-1l
In the transferable setting, we explore various conditioning strategies. In line with~\citet{tan2025amortizedsamplingtransferablenormalizing}, the conditioning information considers:\ (1) atom type, $A$;\ (2) residue type, $R$;\ (3) residue position, $P$; and (4) sequence length, $L$. To feed this conditioning information into the model, we consider the decoder-only architectures commonly employed in modern LLMs.

More specifically, in this variant, we pass the conditioning information through a separate transformer model with non-causal masking (token by token predictions should have access to global conditioning information). The representations obtained from the transformer are subsequently injected directly into the first layer of the larger transformer. For language, most conditional information is passed in at the beginning of the model as context, with additional passing at each layer excluded.

\subsection{Model Sizes}
\paragraph{\nameshort and \namemodel}
For all single peptide and transferable generation experiments, we concluded upon the model configurations reported in~\cref{tab:robin_sizes}. For ease of contrast, we include the configurations used for competing models:\ SBG and Prose in \cref{tab:prose_sizes}. In addition, although the model configurations appear identical between all alanine datasets for \nameshort and \namemodel, the difference in parameter count can be attributed to the number of bins used. As stated in~\cref{tab:arbg_configs}, for alanine dipeptide and tri-alanine, we use 4096 bins, while for alanine tetrapeptide and larger, we use 8192 bins. 

\begin{table}[!h]
\centering
\caption{SBG and Prose configurations across molecular systems \citep{tan_scalable_2025,tan2025amortizedsamplingtransferablenormalizing}.}
\label{tab:tarflow_sbg}
\begin{tabular}{lccccc}
\toprule
System & Layers / Block & Blocks & Channels & Parameters (M) \\
\toprule
SBG (Alanine Dipeptide)        & 4 & 4 & 256 & 13 \\
SBG (Tri-alanine)              & 6 & 6 & 256 & 29 \\
SBG (Alanine Tetrapeptide)     & 6 & 6 & 384 & 64 \\
SBG (Hexa-alanine)             & 6 & 6 & 384 & 64 \\
SBG (Chignolin)                & 8 & 8 & 384 & 114 \\
\midrule
Prose (Transferable)           & 8 & 8 & 384 & 285 \\
\bottomrule
\end{tabular}\label{tab:prose_sizes}
\end{table}
\begin{table}
\centering
\caption{\nameshort and \namemodel configurations for single system experiments and transferable sampling. Including the standard deviation of the centered training data and the bin width in picometers (1/100 of an Angstrom).}
\label{tab:arbg_configs}
\resizebox{\linewidth}{!}{
\begin{tabular}{lccccccccc}
\toprule
System & Heads & Head Dim.\ & Layers & Channels & Expansion & Parameters (M) & Bins & Std & Bin Width (pm)\\
\toprule
\nameshort (Alanine Dipeptide)     & 8 & 32 & 8 & 256 & 4 & 7.4 & 4096 & 0.163 & 0.358\\
\nameshort (Tri-alanine)           & 8 & 32 & 8 & 256 & 4 & 7.4 & 4096 & 0.210 & 0.461\\
\nameshort (Alanine Tetrapeptide)  & 8 & 32 & 8 & 256 & 4 & 10.5 & 8192 & 0.227 & 0.499 \\
\nameshort (Hexa-alanine)          & 8 & 32 & 8 & 256 & 4 & 10.5 & 8192 & 0.299 & 0.328 \\
\nameshort (Chignolin)             & 8 & 64 & 12 & 512 & 4 & 39.9 & 8192 & 0.345 & 0.379 \\
\midrule
\namemodel (Transferable)          & 8 & 64 & 16 & 768 & 4 & 132.0 & 8192 & 0.350 & 0.385 \\
\bottomrule
\end{tabular}\label{tab:robin_sizes}}
\end{table}

\looseness=-1
\xhdr{MoL/GMM-PixelCNN++ and GIVT}
For fair comparison, with \nameshort we inherit the same de-quantization strategy and architectural blocks as \nameshort when constructing the MoL/GMM-PixelCNN++ baselines. For all single peptide experiments, we set the bin count to $|B|=2048$. For GIVT, as it is a fully continuous model in the vein of a true Mixture Density Network, there is no de-quantization needed. In all three baselines, the model includes an additional output projection head that outputs the parameters of the mixture distribution and has the shape:
\begin{equation}
    \mathrm{output proj} = 3K+ 2,
\end{equation}
where $K$ is the number of mixtures, and for each mixture we output the means, scales, and logits over the mixture components. Lastly, we use two additional parameters as dependency coefficients to model the linear dependency coefficients for better modelling of correlated coordinates. In each case, the models are trained by computing the negative log likelihood under the mixture distribution. All remaining training settings are identical to the main model \nameshort for single peptide systems.

\subsection{Training}
\paragraph{Optimizer, Learning Rate, and Scheduler}
Following the recent success of the Muon optimizer in accelerating LLM training~\citep{jordan6muon}, we adopt it in our experiments. Muon is a momentum-based optimizer that applies Newton--Schulz orthogonalization to gradient updates. For weight matrices, it maintains an orthogonalized momentum buffer computed using five Newton--Schulz iterations, with Nesterov momentum ($\mu = 0.95$) and a learning rate of $0.02$. For one-dimensional parameters, such as biases and normalization layers, the optimizer falls back to AdamW~\citep{loshchilov2019decoupledweightdecayregularization} with a learning rate of $0.002$ and $(\beta_1, \beta_2) = (0.9, 0.999)$. We apply decoupled weight decay of $0.01$ to all parameters and combine the optimizer with a cosine learning rate schedule with a warm-up phase covering $5\%$ of the training iterations.

\paragraph{Flash Attention} To improve training efficiency and reduce memory consumption, we employ FlashAttention for all models~\citep{dao2023flashattention2}. FlashAttention computes the attention operation using fused kernels that significantly reduce the number of memory reads/writes by avoiding the explicit materialization of attention matrices. This substantially reduces memory overhead and improves throughput, enabling faster training and better hardware utilization without altering the underlying attention computation or model behaviour.

\paragraph{Lower precision training and inference} We tested training in both bf16 and float32. We found that for smaller models, bf16 performed equally well to float32 training. However, for larger models, and primarily \modelname, we observed that bf16 training sometimes resulted in training instability. We therefore chose to train \modelname using float32 precision. We also found that for inference, float32 inference was more consistent than bf16, which anecdotally had some numerical irregularities. 

\paragraph{Hardware} Training and inference was performed across multiple heterogeneous clusters containing a variety of NVIDIA GPUs. We primarily utilized L40S and RTX6000 Pro GPUs for inference and training of single-system \nameshort models and H100/H200 GPUs for training of \modelname.

\subsubsection{Single-system}

For the Chignolin scaling plot in \cref{fig:scaling}, we removed the scheduler, fixed the learning rate to $3 \times 10^{-4}$, and set the batch size to $256$ across all trained models to ensure a fair comparison.

\subsubsection{\namemodel training}

We train \namemodel using the same number of training steps as Prose, with a comparable batch size of 448 (7/8 of the Prose batch size 512). We use a learning rate $5 \times 10^{-4}$, with a cosine annealing schedule without weight decay.

\subsection{Inference}

\subsubsection{Autoregressive Twisted Sequential Monte Carlo}\label{sec:artsmc}

In this section, we detail our usage of SMC and the details on how it is applied in our peptide setting. In the ideal setting with a terminal reward $\mu_{\text{target}}(x)$, we would directly have access to the optimal intermediate density i.e.\

\begin{equation}
    \eta_j^\star \propto p_\theta(x_{\le j}) \psi_j^\star(x_{\le j}).
\end{equation}

with optimal twist functions:

\begin{equation}
    \psi_j^\star(x_{\le j}) \propto \sum_{x_{> j}} p_\theta(x_{>j} | x_{\le j}) \mu_{\text{target}}(x)
\end{equation}

However, these optimal twist functions are, in general, difficult to obtain. While many works attempt to learn them using a variety of objective,s including soft $Q$-learning~\citep{mudgal2023controlled}, noise contrastive estimation~\citep{lawson2022sixosmoothinginferencetwisted}, and classification~\citep{Yang_2021}, we already have a reasonable twist function using pre-defined energy functions. We approximate the twist function by the relative likelihood of a sample under the target energy function and our model. To encourage samples that are lower energy during our autoregressive generation.

We use the intermediate signals of a peptide-based energy function (either  Amber ff99SBildn or Amber 14, depending on the system)~\citep{tan_scalable_2025}. However, these peptide-based energy functions only function correctly on complete peptides. In the case of a partial peptide (for instance, the subset of atoms PPW in the peptide PPWRECNN), these atoms are not able to be processed by the energy function because they do not form a complete peptide, which has a C-terminus oxygen cap, often denoted as OXT, the terminal oxygen.

We are therefore not able to efficiently evaluate the partial energy of a subsequence like PPW. Instead, we generate one more atom, the nitrogen atom of the next residue, which can tell us a reasonable direction for the oxygen atom to go. Our procedure is then as follows:
\begin{enumerate}[topsep=0pt, partopsep=0pt, itemsep=3pt, parsep=0pt, leftmargin=*]
    \item Generate dimensions until we have generated a full residue plus the nitrogen atom of the next residue
    \item Find the direction of the nitrogen atom relative to the carbon atom its attached to
    \item Replace this nitrogen atom with an oxygen atom in the same direction off of the carbon atom, but at the optimal distance for carbon-oxygen bonds at $0.125$ nanometers.
    \item Evaluate the energy of this subset of the peptide using the relevant Amber energy function.
\end{enumerate}
This procedure allows us to calculate the intermediate twist functions, which then allows for resampling to improve the distribution during autoregressive sampling.

We perform SMC over the entire batch of samples for the best performance. This is much larger than can fit on a single GPU. Therefore, we need to operate in batches. We perform batched generation with KV caching per residue generated for minimal slow down. This has a couple of advantages over other generation procedures. First, we only need to hold a single gpu batch worth of Keys and Values at once. Second, we only need to regenerate the cache every residue. This means for a large batch of B samples, we need to regenerate the cache at most $(L - 1) B$ times, where $L$ is the length of the peptide. This minimizes the additional overhead of SMC to a few more model evaluations, which is less than a 10\% overhead.

\subsubsection{KV-caching} KV-caching is a technique used during inference that stores the attention keys and values from previous tokens during decoding to prevent recomputing them at every step~\citep{pope2023efficiently}. By caching these tensors, the model avoids redundant attention computations over the entire prefix, reducing per-token complexity and lowering latency at inference time. Consequently, we adopt it to reduce inference time cost when generating samples from the equilibrium distribution. 

\subsection{Table and Figure Specific Details}

\paragraph{\cref{tab:single-system-main}} ECNF++, RegFlow, SBG, FALCON-A, and FALCON results are taken from \citet{tan_scalable_2025} and \citet{rehman2025falconfewstepaccuratelikelihoods}. SBG here is SBG with Sequential Monte--Carlo sampling instead of self-normalized importance sampling (SNIS), as this performed slightly better. All other methods utilize SNIS. ECNF++ dashes represent models that were not run on that system due to scaling concerns.

\cut{
\paragraph{\cref{tab:transferable-main}} 
We evaluate against both Boltzmann Emulators and Generators. We note that the TimeWarp, BioEmu, and UniSim benchmarks are not trained the same ManyPeptidesMD dataset and are therefore at a disadvantage;\ we provide them for reference. We also note that both TimeWarp and ECNF were not evaluated on sequences with N-terminal proline due to absence in the TimeWarp training dataset. All other methods are trained on the ManyPeptidesMD dataset. Dashes represent methods that were not tested on that sized system due to scaling issues. 
}

\end{document}

%% file: math_commands.tex
\usepackage{amsthm}
\usepackage{xcolor}
\usepackage{amsmath,amsfonts,bm}
\makeatletter
\newtheorem*{rep@theorem}{\rep@title}
\newcommand{\newreptheorem}[2]{%
\newenvironment{rep#1}[1]{%
 \def\rep@title{#2 \ref{##1}}%
 \begin{rep@theorem}}%
 {\end{rep@theorem}}}
\makeatother

\newreptheorem{theorem}{Theorem}

\definecolor{myred}{RGB}{215,48,39}
\definecolor{mygreen}{RGB}{26,152,80}

\newcommand{\halfmark}{\textcolor{gray}{\checkmark\kern-1.1ex\raisebox{.7ex}{\rotatebox[origin=c]{125}{--}}}}
\usepackage[framemethod=TikZ]{mdframed}
\mdfdefinestyle{MyFrame}{%
    linecolor=black,
    outerlinewidth=.3pt,
    roundcorner=5pt,
    innertopmargin=1pt, 
    innerbottommargin=1pt, 
    innerrightmargin=1pt,
    innerleftmargin=1pt,
    backgroundcolor=black!0!white}

\mdfdefinestyle{MyFrame2}{%
    linecolor=white,
    outerlinewidth=1pt,
    roundcorner=2pt,
    innertopmargin=5pt,
    innerbottommargin=5pt,
    innerrightmargin=5pt,
    innerleftmargin=5pt,
    backgroundcolor=black!10!white}

\mdfdefinestyle{MyFrameEq}{%
    linecolor=white,
    outerlinewidth=0pt,
    roundcorner=0pt,
    innertopmargin=0pt,
    innerbottommargin=0pt,
    innerrightmargin=7pt,
    innerleftmargin=7pt,
    backgroundcolor=black!3!white}

\newcommand{\RNum}[1]{\uppercase\expandafter{\romannumeral #1\relax}}

\newcommand{\R}{\mathcal{R}}

\newcommand{\vertiii}[1]{{\left\vert\kern-0.25ex\left\vert\kern-0.25ex\left\vert #1 
    \right\vert\kern-0.25ex\right\vert\kern-0.25ex\right\vert}}
\newcommand{\vertiiii}[1]{{\vert\kern-0.25ex\vert\kern-0.25ex\vert #1 
    \vert\kern-0.25ex\vert\kern-0.25ex\vert}}

\usepackage{mathtools}

\newcommand{\xhdr}[1]{{\noindent\bfseries #1}.}
\renewcommand{\paragraph}[1]{{\noindent\bfseries #1}.}
\newcommand{\cut}[1]{}

\newcommand{\removelatexerror}{\let\@latex@error\@gobble}

\def\eqref#1{Eq.~\ref{#1}}

\def\1{\bm{1}}

\DeclareMathAlphabet{\mathsfit}{\encodingdefault}{\sfdefault}{m}{sl}
\SetMathAlphabet{\mathsfit}{bold}{\encodingdefault}{\sfdefault}{bx}{n}

\def\gE{{\mathcal{E}}}

\def\gI{{\mathcal{I}}}

\def\gT{{\mathcal{T}}}

\def\gW{{\mathcal{W}}}

\def\sE{{\mathbb{E}}}

\def\R{{\mathbb{R}}}

\newcommand{\KL}{\mathbb{D}_{\mathrm{KL}}}



%% file: main.bib
@article{murtada2025mdllm1largelanguagemodel,
      title={MD-LLM-1: A Large Language Model for Molecular Dynamics}, 
      author={Mhd Hussein Murtada and Z. Faidon Brotzakis and Michele Vendruscolo},
      year={2025},
      journal={arXiv}
}

@article {Billera2024,
	author = {Billera, Lukas and Oresten, Anton and St{\r a}lmarck, Aron and Sato, Kenta and Kaduk, Mateusz and Murrell, Ben},
	title = {The Continuous Language of Protein Structure},
	elocation-id = {2024.05.11.593685},
	year = {2024},
	doi = {10.1101/2024.05.11.593685},
	publisher = {Cold Spring Harbor Laboratory},
	journal = {bioRxiv}
}

@misc{nextstepteam2025nextstep1autoregressiveimagegeneration,
      title={NextStep-1: Toward Autoregressive Image Generation with Continuous Tokens at Scale}, 
      author={NextStep Team and Chunrui Han and Guopeng Li and Jingwei Wu and Quan Sun and Yan Cai and Yuang Peng and Zheng Ge and Deyu Zhou and Haomiao Tang and Hongyu Zhou and Kenkun Liu and Ailin Huang and Bin Wang and Changxin Miao and Deshan Sun and En Yu and Fukun Yin and Gang Yu and Hao Nie and Haoran Lv and Hanpeng Hu and Jia Wang and Jian Zhou and Jianjian Sun and Kaijun Tan and Kang An and Kangheng Lin and Liang Zhao and Mei Chen and Peng Xing and Rui Wang and Shiyu Liu and Shutao Xia and Tianhao You and Wei Ji and Xianfang Zeng and Xin Han and Xuelin Zhang and Yana Wei and Yanming Xu and Yimin Jiang and Yingming Wang and Yu Zhou and Yucheng Han and Ziyang Meng and Binxing Jiao and Daxin Jiang and Xiangyu Zhang and Yibo Zhu},
      year={2025},
      eprint={2508.10711},
      archivePrefix={arXiv},
      primaryClass={cs.CV},
}

@inproceedings{gloy2025hollowflowefficientsamplelikelihood,
      title={HollowFlow: Efficient Sample Likelihood Evaluation using Hollow Message Passing}, 
      author={Johann Flemming Gloy and Simon Olsson},
      year={2025},
      booktitle={Neural Information Processing Systems (NeurIPS)}
}

@inproceedings{li2024autoregressiveimagegenerationvector,
      title={Autoregressive Image Generation without Vector Quantization}, 
      author={Tianhong Li and Yonglong Tian and He Li and Mingyang Deng and Kaiming He},
      year={2024},
      booktitle={Neural Information Processing Systems}
}

@inproceedings{rehman2025falconfewstepaccuratelikelihoods,
      title={FALCON: Few-step Accurate Likelihoods for Continuous Flows}, 
      author={Danyal Rehman and Tara Akhound-Sadegh and Artem Gazizov and Yoshua Bengio and Alexander Tong},
      year={2026},
      booktitle={International Conference on Learning Representations (ICLR)}
}

@inproceedings{tschannen2024givtgenerativeinfinitevocabularytransformers,
      title={GIVT: Generative Infinite-Vocabulary Transformers}, 
      author={Michael Tschannen and Cian Eastwood and Fabian Mentzer},
      year={2024},
      booktitle={ECCV}
}

@article{OLSSON2026103213,
title = {Generative molecular dynamics},
journal = {Current Opinion in Structural Biology},
year = {2026},
author = {Simon Olsson}
}

@article{rahman1964correlations,
  title={Correlations in the motion of atoms in liquid argon},
  author={Rahman, Aneesur},
  journal={Physical review},
  volume={136},
  number={2A},
  pages={A405},
  year={1964},
  publisher={APS}
}

@article{alder1959studies,
  title={Studies in molecular dynamics. I. General method},
  author={Alder, Berni J and Wainwright, Thomas Everett},
  journal={The Journal of Chemical Physics},
  volume={31},
  number={2},
  pages={459--466},
  year={1959},
  publisher={American Institute of Physics}
}

@article{henin2022enhanced,
  title={Enhanced sampling methods for molecular dynamics simulations},
  author={H{\'e}nin, J{\'e}r{\^o}me and Leli{\`e}vre, Tony and Shirts, Michael R and Valsson, Omar and Delemotte, Lucie},
  journal={arXiv preprint arXiv:2202.04164},
  year={2022}
}

@misc{cheng2025scalableautoregressive3dmolecule,
      title={Scalable Autoregressive 3D Molecule Generation}, 
      author={Austin H. Cheng and Chong Sun and Alán Aspuru-Guzik},
      year={2025},
      eprint={2505.13791},
}

@article{syed_nonreversible_2021,
    author = {Syed, Saifuddin and Bouchard-Côté, Alexandre and Deligiannidis, George and Doucet, Arnaud},
    title = {Non-Reversible Parallel Tempering: A Scalable Highly Parallel MCMC Scheme},
    journal = {Journal of the Royal Statistical Society Series B: Statistical Methodology},
    volume = {84},
    number = {2},
    pages = {321-350},
    year = {2021},
}

@inproceedings{kapuśniak2025marsfmgenerativemodelingmolecular,
      title={MarS-FM: Generative Modeling of Molecular Dynamics via Markov State Models}, 
      author={Kacper Kapuśniak and Cristian Gabellini and Michael Bronstein and Prudencio Tossou and Francesco Di Giovanni},
      year={2026},
      booktitle={International Conference on Learning Representations (ICLR)}
}

@article{
2,
  title={DPLM-2: A Multimodal Diffusion Protein Language Model},
  author={Xinyou Wang and Zaixiang Zheng and Fei Ye and Dongyu Xue and Shujian Huang and Quanquan Gu},
  journal={ArXiv},
  year={2024},
  volume={abs/2410.13782},
  url={https://api.semanticscholar.org/CorpusID:273403705}
}

@article{Bannwarth2018GFN2xTBAnAA,
  title={GFN2-xTB-An Accurate and Broadly Parametrized Self-Consistent Tight-Binding Quantum Chemical Method with Multipole Electrostatics and Density-Dependent Dispersion Contributions.},
  author={Christoph Bannwarth and Sebastian Ehlert and Stefan Grimme},
  journal={Journal of chemical theory and computation},
  year={2018},
  volume={15 3},
  pages={
          1652-1671
        }
}

@inproceedings{salimans2017pixelcnn++,
  title={Pixelcnn++: Improving the pixelcnn with discretized logistic mixture likelihood and other modifications},
  author={Salimans, Tim and Karpathy, Andrej and Chen, Xi and Kingma, Diederik P},
  booktitle={International Conference on Learning Representations (ICLR)},
  year={2017}
}

@misc{owen2013monte,
  title={Monte Carlo theory, methods and examples},
  author={Owen, Art B},
  year={2013},
  publisher={Stanford}
}

@inproceedings{yu2025unisimunifiedsimulatortimecoarsened,
	title        = {UniSim: A Unified Simulator for Time-Coarsened Dynamics of Biomolecules},
	author       = {Ziyang Yu and Wenbing Huang and Yang Liu},
	year         = 2025,
	booktitle = {International Conference on Machine Learning (ICML)}
}

@article{noe2019boltzmann,
	title        = {Boltzmann generators: Sampling equilibrium states of many-body systems with deep learning},
	author       = {No{\'e}, Frank and Olsson, Simon and K{\"o}hler, Jonas and Wu, Hao},
	year         = 2019,
	journal      = {Science},
	publisher    = {American Association for the Advancement of Science}
}

@inproceedings{klein2023timewarp,
	title        = {Timewarp: Transferable acceleration of molecular dynamics by learning time-coarsened dynamics},
	author       = {Klein, Leon and Foong, Andrew and Fjelde, Tor and Mlodozeniec, Bruno and Brockschmidt, Marc and Nowozin, Sebastian and No{\'e}, Frank and Tomioka, Ryota},
	year         = 2023,
	booktitle      = {Neural Information Processing Systems (NeurIPS)}
}

@inproceedings{rehman2025fortforwardonlyregressiontraining,
	title        = {Efficient Regression-Based Training of Normalizing Flows for Boltzmann Generators},
	author       = {Danyal Rehman and Oscar Davis and Jiarui Lu and Jian Tang and Michael Bronstein and Yoshua Bengio and Alexander Tong and Avishek Joey Bose},
	year         = 2026,
	booktitle      = {International Conference on Learning Representations (ICLR)}
}

@inproceedings{tan2025amortizedsamplingtransferablenormalizing,
	title        = {Amortized Sampling with Transferable Normalizing Flows},
	author       = {Charlie B. Tan and Majdi Hassan and Leon Klein and Saifuddin Syed and Dominique Beaini and Michael M. Bronstein and Alexander Tong and Kirill Neklyudov},
	year         = 2025,
	booktitle    = {Neural Information Processing Systems (NeurIPS)}
}

@inproceedings{akhoundsadegh2025progressiveinferencetimeannealingdiffusion,
	title        = {Progressive Inference-Time Annealing of Diffusion Models for Sampling from Boltzmann Densities},
	author       = {Tara Akhound-Sadegh and Jungyoon Lee and Avishek Joey Bose and Valentin De Bortoli and Arnaud Doucet and Michael M. Bronstein and Dominique Beaini and Siamak Ravanbakhsh and Kirill Neklyudov and Alexander Tong},
	year         = 2025,
	booktitle    = {Neural Information Processing Systems (NeurIPS)}
}

@article{albergo_building_2023,
	title        = {Building Normalizing Flows with Stochastic Interpolants},
	author       = {Albergo, Michael S. and {Vanden-Eijnden}, Eric},
	year         = 2023,
	journal      = {International Conference on Learning Representations (ICLR)}
}

@article{chen_neural_2018,
	title        = {Neural Ordinary Differential Equations},
	author       = {Chen, Ricky T. Q. and Rubanova, Yulia and Bettencourt, Jesse and Duvenaud, David K},
	year         = 2018,
	journal      = {Neural Information Processing Systems (NeurIPS)}
}

@inproceedings{dao2023flashattention2,
	title        = {Flash{A}ttention-2: Faster Attention with Better Parallelism and Work Partitioning},
	author       = {Dao, Tri},
	year         = 2024,
	booktitle    = {International Conference on Learning Representations (ICLR)}
}

@article{del2006sequential,
	title        = {Sequential {Monte Carlo} samplers},
	author       = {Del Moral, Pierre and Doucet, Arnaud and Jasra, Ajay},
	year         = 2006,
	journal      = {Journal of the Royal Statistical Society Series B: Statistical Methodology}
}

@article{dinh2016density,
	title        = {Density estimation using {Real NVP}},
	author       = {Dinh, Laurent and Sohl-Dickstein, Jascha and Bengio, Samy},
	year         = 2017,
	journal      = {International Conference on Learning Representations (ICLR)}
}

@book{doucet2001sequential,
	title        = {Sequential Monte Carlo methods in practice},
	author       = {Doucet, Arnaud and De Freitas, Nando and Gordon, Neil James and others},
	year         = 2001,
    publisher={Springer},
}

@inproceedings{zhao2024probabilisticinferencelanguagemodels,
      title={Probabilistic Inference in Language Models via Twisted Sequential Monte Carlo}, 
      author={Stephen Zhao and Rob Brekelmans and Alireza Makhzani and Roger Grosse},
      year={2024},
      booktitle={International Conference on Machine Learning}
}

@inproceedings{loshchilov2019decoupledweightdecayregularization,
      title={Decoupled Weight Decay Regularization}, 
      author={Ilya Loshchilov and Frank Hutter},
      year={2019},
      booktitle={International Conference on Representation Learning (ICLR)}
}

@article{hochbruck2020convergence,
title={On the convergence of Lawson methods for semilinear stiff problems},
author={Hochbruck, Marlis and Leibold, Jan and Ostermann, Alexander},
journal={Numerische Mathematik},
year={2020}
}

@article{hochbruck2010exponential,
title={Exponential integrators},
author={Hochbruck, Marlis and Ostermann, Alexander},
journal={Acta Numerica},
year={2010}
}

@article{Hjorth_Larsen_2017,
doi = {10.1088/1361-648X/aa680e},
year = {2017},
publisher = {IOP Publishing},
volume = {29},
number = {27},
pages = {273002},
author = {Hjorth Larsen, Ask and Jørgen Mortensen, Jens and Blomqvist, Jakob and Castelli, Ivano E and Christensen, Rune and Dułak, Marcin and Friis, Jesper and Groves, Michael N and Hammer, Bjørk and Hargus, Cory and Hermes, Eric D and Jennings, Paul C and Bjerre Jensen, Peter and Kermode, James and Kitchin, John R and Leonhard Kolsbjerg, Esben and Kubal, Joseph and Kaasbjerg, Kristen and Lysgaard, Steen and Bergmann Maronsson, Jón and Maxson, Tristan and Olsen, Thomas and Pastewka, Lars and Peterson, Andrew and Rostgaard, Carsten and Schiøtz, Jakob and Schütt, Ole and Strange, Mikkel and Thygesen, Kristian S and Vegge, Tejs and Vilhelmsen, Lasse and Walter, Michael and Zeng, Zhenhua and Jacobsen, Karsten W},
title = {The atomic simulation environment—a Python library for working with atoms},
journal = {Journal of Physics: Condensed Matter},
}

@article{eastman2024openmm,
author = {Eastman, Peter and Galvelis, Raimondas and Peláez, Raúl P. and Abreu, Charlles R. A. and Farr, Stephen E. and Gallicchio, Emilio and Gorenko, Anton and Henry, Michael M. and Hu, Frank and Huang, Jing and Krämer, Andreas and Michel, Julien and Mitchell, Joshua A. and Pande, Vijay S. and Rodrigues, João PGLM and Rodriguez-Guerra, Jaime and Simmonett, Andrew C. and Singh, Sukrit and Swails, Jason and Turner, Philip and Wang, Yuanqing and Zhang, Ivy and Chodera, John D. and De Fabritiis, Gianni and Markland, Thomas E.},
title = {OpenMM 8: Molecular Dynamics Simulation with Machine Learning Potentials},
journal = {The Journal of Physical Chemistry B},
volume = {128},
number = {1},
pages = {109-116},
year = {2024},
doi = {10.1021/acs.jpcb.3c06662},
}

@article{klein2023equivariant,
	title        = {Equivariant flow matching},
	author       = {Leon Klein and Andreas Krämer and Frank Noé},
	year         = 2023,
	journal      = {Neural Information Processing Systems (NeurIPS)}
}

@article{kohler2020equivariant,
	title        = {Equivariant flows: exact likelihood generative learning for symmetric densities},
	author       = {K{\"o}hler, Jonas and Klein, Leon and No{\'e}, Frank},
	year         = 2020,
	journal      = {International Conference on Machine Learning (ICML)}
}

@article{lipman_flow_2022,
	title        = {Flow Matching for Generative Modeling},
	author       = {Lipman, Yaron and Chen, Ricky T. Q. and {Ben-Hamu}, Heli and Nickel, Maximilian and Le, Matt},
	year         = 2023,
	journal      = {International Conference on Learning Representations (ICLR)}
}

@article{liu_rectified_2022,
	title        = {Rectified Flow: A Marginal Preserving Approach to Optimal Transport},
	author       = {Liu, Qiang},
	year         = 2022,
	journal      = {arXiv}
}

@article{matsumoto2002molecular,
	title        = {Molecular dynamics simulation of the ice nucleation and growth process leading to water freezing},
	author       = {Matsumoto, Masakazu and Saito, Shinji and Ohmine, Iwao},
	year         = 2002,
	journal      = {Nature}
}

@article{midgley_se3_2023,
	title        = {{SE(3)} Equivariant Augmented Coupling Flows},
	author       = {Midgley, Laurence I and Stimper, Vincent and Antor{\'a}n, Javier and Mathieu, Emile and Sch{\"o}lkopf, Bernhard and Hern{\'a}ndez-Lobato, Jos{\'e} Miguel},
	year         = 2023,
	journal      = {Neural Information Processing Systems (NeurIPS)}
}

@inproceedings{boffi2025buildconsistencymodellearning,
	title        = {How to build a consistency model: Learning flow maps via self-distillation},
	author       = {Nicholas M. Boffi and Michael S. Albergo and Eric Vanden-Eijnden},
	year         = 2025,
	booktitle = {Neural Information Processing Systems (NeurIPS)}
}

@article{ramachandran1963stereochemistry,
	title        = {Stereochemistry of polypeptide chain configurations},
	author       = {Ramachandran, G N and Ramakrishnan, C and Sasisekharan, V},
	year         = 1963,
	journal      = {Journal of Molecular Biology}
}

@article{rezende2015variational,
	title        = {Variational inference with normalizing flows},
	author       = {Rezende, Danilo and Mohamed, Shakir},
	year         = 2015,
	journal      = {International Conference on Machine Learning (ICML)}
}

@article{rizzi2021targeted,
	title        = {Targeted free energy perturbation revisited: Accurate free energies from mapped reference potentials},
	author       = {Rizzi, Andrea and Carloni, Paolo and Parrinello, Michele},
	year         = 2021,
	journal      = {The journal of physical chemistry letters}
}

@inproceedings{
mudgal2023controlled,
title={Controlled Decoding from Language Models},
author={Sidharth Mudgal and Jong Lee and Harish Ganapathy and YaGuang Li and Tao Wang and Yanping Huang and Zhifeng Chen and Heng-Tze Cheng and Michael Collins and Jilin Chen and Alex Beutel and Ahmad Beirami},
booktitle={International Conference on Machine Learning (ICML)},
year={2024}
}

@inproceedings{lawson2022sixosmoothinginferencetwisted,
      title={SIXO: Smoothing Inference with Twisted Objectives}, 
      author={Dieterich Lawson and Allan Raventós and Andrew Warrington and Scott Linderman},
      year={2022},
      booktitle={Neural Information Processing Systems (NeurIPS)}
}

@inproceedings{Yang_2021,
   title={FUDGE: Controlled Text Generation With Future Discriminators},
   booktitle={Proceedings of the 2021 Conference of the North American Chapter of the Association for Computational Linguistics: Human Language Technologies},
   publisher={Association for Computational Linguistics},
   author={Yang, Kevin and Klein, Dan},
   year={2021},
   pages={3511–3535} }

@inproceedings{gebauer_gschnet_2019,
title = {Symmetry-adapted generation of 3d point sets for the targeted discovery of molecules},
author = {Gebauer, Niklas and Gastegger, Michael and Sch\"{u}tt, Kristof},
booktitle = {Neural Information Processing Systems (NeurIPS)},
year = {2019},
}

@article { ObstaclestoHighDimensionalParticleFiltering,
      author = "Chris Snyder and Thomas Bengtsson and Peter Bickel and Jeff Anderson",
      title = "Obstacles to High-Dimensional Particle Filtering",
      journal = "Monthly Weather Review",
      year = "2008",
      publisher = "American Meteorological Society",
      volume = "136",
      number = "12",
      doi = "10.1175/2008MWR2529.1",
      pages=      "4629 - 4640",
}

@inproceedings{blessing2024elboslargescaleevaluationvariational,
      title={Beyond ELBOs: A Large-Scale Evaluation of Variational Methods for Sampling}, 
      author={Denis Blessing and Xiaogang Jia and Johannes Esslinger and Francisco Vargas and Gerhard Neumann},
      year={2024},
      booktitle={International Conference on Machine Learning (ICML)}
}

@inproceedings{vonklitzing2025learningboltzmanngeneratorsconstrained,
      title={Learning Boltzmann Generators via Constrained Mass Transport}, 
      author={Christopher von Klitzing and Denis Blessing and Henrik Schopmans and Pascal Friederich and Gerhard Neumann},
      year={2025},
      booktitle={International Conference on Learning Representations (ICLR)}
}

@misc{shazeer2020gluvariantsimprovetransformer,
      title={GLU Variants Improve Transformer}, 
      author={Noam Shazeer},
      year={2020},
}

@inproceedings{zhang2019rootmeansquarelayer,
      title={Root Mean Square Layer Normalization}, 
      author={Biao Zhang and Rico Sennrich},
      year={2019},
      booktitle={Neural Information Processing Systems (NeurIPS)}
}

@article{wirnsberger2020targeted,
	title        = {Targeted free energy estimation via learned mappings},
	author       = {Wirnsberger, Peter and Ballard, Andrew J and Papamakarios, George and Abercrombie, Stuart and Racani{\`e}re, S{\'e}bastien and Pritzel, Alexander and Jimenez Rezende, Danilo and Blundell, Charles},
	year         = 2020,
	journal      = {J. Chem. Phys.}
}

@article{zhai2024normalizing,
	title        = {Normalizing flows are capable generative models},
	author       = {Zhai, Shuangfei and Zhang, Ruixiang and Nakkiran, Preetum and Berthelot, David and Gu, Jiatao and Zheng, Huangjie and Chen, Tianrong and Bautista, Miguel Angel and Jaitly, Navdeep and Susskind, Josh},
	year         = 2025,
	journal      = {International Conference on Learning Representations (ICLR)}
}

@article{tabak2010density,
	title        = {Density estimation by dual ascent of the log-likelihood},
	author       = {Tabak, Esteban G and Vanden-Eijnden, Eric},
	year         = 2010,
	journal      = {Communications in Mathematical Sciences}
}

@article{noe_constructing_2009,
	title        = {Constructing the equilibrium ensemble of folding pathways from short off-equilibrium simulations},
	author       = {Noé, Frank and Schütte, Christof and Vanden-Eijnden, Eric and Reich, Lothar and Weikl, Thomas R.},
	year         = 2009,
	journal      = {Proceedings of the National Academy of Sciences},
}

@article{lindorff-larsen_how_2011,
	title        = {How {Fast}-{Folding} {Proteins} {Fold}},
	author       = {Lindorff-Larsen, Kresten and Piana, Stefano and Dror, Ron O. and Shaw, David E.},
	year         = 2011,
	journal      = {Science}
}

@inproceedings{tan_scalable_2025,
	title        = {Scalable {Equilibrium} {Sampling} with {Sequential} {Boltzmann} {Generators}},
	author       = {Tan, Charlie B. and Bose, Avishek Joey and Lin, Chen and Klein, Leon and Bronstein, Michael M. and Tong, Alexander},
	year         = 2025,
	booktitle = {International Conference on Learning Representations (ICLR)}
}

@inproceedings{schopmans_temperature-annealed_2025,
	title        = {Temperature-{Annealed} {Boltzmann} {Generators}},
	author       = {Schopmans, Henrik and Friederich, Pascal},
	year         = 2025,
	booktitle = {International Conference on Machine Learning (ICML)}
}

@inproceedings{klein_transferable_2024,
	title        = {Transferable Boltzmann Generators},
	author       = {Klein, Leon and Noe, Frank},
	year         = 2024,
	booktitle      = {Neural Information Processing Systems (NeurIPS)}
}

@misc{peluchetti2021,
	title        = {Non-Denoising Forward-Time Diffusions},
	author       = {Stefano Peluchetti},
	year         = 2021
}

@inproceedings{aggarwal2025boltznce,
	title        = {{BoltzNCE}: Learning Likelihoods for Boltzmann Generation with Stochastic Interpolants and Noise Contrastive Estimation},
	author       = {Aggarwal, Rishal and Chen, Jacky and Boffi, Nicholas M and Koes, David Ryan},
	year         = 2025,
	booktitle      = {Neural Information Processing Systems (NeurIPS)}
}

@article{kish1957confidence,
	title        = {Confidence intervals for clustered samples},
	author       = {Kish, Leslie},
	year         = 1957,
	journal      = {American Sociological Review},
	publisher    = {JSTOR}
}

@inproceedings{draxler2024free,
	title        = {Free-form flows: Make any architecture a normalizing flow},
	author       = {Draxler, Felix and Sorrenson, Peter and Zimmermann, Lea and Rousselot, Armand and K{\"o}the, Ullrich},
	year         = 2024,
	booktitle    = {International Conference on Artificial Intelligence and Statistics (AISTATS)},
}

@inproceedings{meyer2021hutch++,
  title={Hutch++: Optimal stochastic trace estimation},
  author={Meyer, Raphael A and Musco, Cameron and Musco, Christopher and Woodruff, David P},
  booktitle={Symposium on Simplicity in Algorithms (SOSA)},
  pages={142--155},
  year={2021},
  organization={SIAM}
}

@techreport{bishop1994mixture,
  title={Mixture density networks},
  author={Bishop, Christopher M},
  year={1994},
  institution={Aston University}
}

@article{parrinello1980crystal,
	title        = {Crystal structure and pair potentials: A molecular-dynamics study},
	author       = {Parrinello, Michele and Rahman, Aneesur},
	year         = {1980},
	journal      = {Physical review letters},
	volume       = {45},
	number       = {14},
	pages        = {1196}
}

@article{yang1952spontaneous,
  title={The spontaneous magnetization of a two-dimensional Ising model},
  author={Yang, Chen Ning},
  journal={Physical Review},
  volume={85},
  number={5},
  pages={808},
  year={1952},
  publisher={APS}
}

@article{perez2025breaking,
  title={Breaking the mold: Overcoming the time constraints of molecular dynamics on general-purpose hardware},
  author={Perez, Danny and Thompson, Aidan and Moore, Stan and Oppelstrup, Tomas and Sharapov, Ilya and Santos, Kylee and Sharifian, Amirali and Kalchev, Delyan Z and Schreiber, Robert and Pakin, Scott and others},
  journal={The Journal of Chemical Physics},
  volume={162},
  number={7},
  year={2025},
  publisher={AIP Publishing}
}

@inproceedings{cornish2020relaxing,
  title={Relaxing bijectivity constraints with continuously indexed normalising flows},
  author={Cornish, Rob and Caterini, Anthony and Deligiannidis, George and Doucet, Arnaud},
  booktitle={International conference on machine learning (ICML)},
  year={2020}
}

@book{runde2005taste,
  title={A taste of topology},
  author={Runde, Volker and Ribet, KA and Axler, S},
  year={2005},
  publisher={Springer}
}

@inproceedings{dupont2019augmented,
  title={Augmented neural odes},
  author={Dupont, Emilien and Doucet, Arnaud and Teh, Yee Whye},
  booktitle={Neural Information Processing Systems (NeurIPS)},
  year={2019}
}

@misc{jordan6muon,
  title={Muon: An optimizer for hidden layers in neural networks},
  author={Jordan, Keller and Jin, Yuchen and Boza, Vlado and Jiacheng, You and Cecista, Franz and Newhouse, Laker and Bernstein, Jeremy},
  url={https://kellerjordan. github.io/posts/muon},
  year={2024}
}

@article{pope2023efficiently,
  title={Efficiently scaling transformer inference},
  author={Pope, Reiner and Douglas, Sholto and Chowdhery, Aakanksha and Devlin, Jacob and Bradbury, James and Heek, Jonathan and Xiao, Kefan and Agrawal, Shivani and Dean, Jeff},
  journal={Proceedings of machine learning and systems},
  volume={5},
  pages={606--624},
  year={2023}
}

@inproceedings{dosovitskiy2020image,
  title={An image is worth 16x16 words: Transformers for image recognition at scale},
  author={Dosovitskiy, Alexey and
                  Lucas Beyer and
                  Alexander Kolesnikov and
                  Dirk Weissenborn and
                  Xiaohua Zhai and
                  Thomas Unterthiner and
                  Mostafa Dehghani and
                  Matthias Minderer and
                  Georg Heigold and
                  Sylvain Gelly and
                  Jakob Uszkoreit and
                  Neil Houlsby},
  booktitle={International Conference on Learning Representations (ICLR)},
  year={2021}
}

@article{comanici2025gemini,
  title={Gemini 2.5: Pushing the frontier with advanced reasoning, multimodality, long context, and next generation agentic capabilities},
  author={Comanici, Gheorghe and Bieber, Eric and Schaekermann, Mike and Pasupat, Ice and Sachdeva, Noveen and Dhillon, Inderjit and Blistein, Marcel and Ram, Ori and Zhang, Dan and Rosen, Evan and others},
  journal={arXiv preprint arXiv:2507.06261},
  year={2025}
}

@article{deng2022deep,
  title={Deep bingham networks: Dealing with uncertainty and ambiguity in pose estimation},
  author={Deng, Haowen and Bui, Mai and Navab, Nassir and Guibas, Leonidas and Ilic, Slobodan and Birdal, Tolga},
  journal={International Journal of Computer Vision},
  volume={130},
  number={7},
  pages={1627--1654},
  year={2022},
  publisher={Springer}
}

@article{razavi2020frmdn,
  title={FRMDN: Flow-based Recurrent Mixture Density Network},
  author={Razavi, Seyedeh Fatemeh and Hosseini, Reshad and Behzad, Tina},
  journal = {Expert Systems with Applications},
  year = {2024}
}

@inproceedings{ha2017neural,
  title={A neural representation of sketch drawings},
  author={Ha, David and Eck, Douglas},
  booktitle={International Conference on Learning Representations (ICLR)},
  year={2018}
}

@article{ha2018world,
  title={World models},
  author={Ha, David and Schmidhuber, J{\"u}rgen},
  journal={arXiv preprint arXiv:1803.10122},
  volume={2},
  number={3},
  year={2018}
}

@article{li2023xrmdn,
  author={Li, Xiaoming and Normandin-Taillon, Hubert and Wang, Chun and Huang, Xiao},
  journal={IEEE Transactions on Intelligent Vehicles}, 
  title={XRMDN: An Extended Recurrent Mixture Density Network for Short-Term Probabilistic Rider Demand Forecasting Considering High Volatility}, 
  year={2025}
}

@article{yoo2016improved,
  title={Improved parameterization of amine--carboxylate and amine--phosphate interactions for molecular dynamics simulations using the CHARMM and AMBER force fields},
  author={Yoo, Jejoong and Aksimentiev, Aleksei},
  journal={Journal of chemical theory and computation},
  volume={12},
  number={1},
  pages={430--443},
  year={2016},
  publisher={ACS Publications}
}

@article{jing2019polarizable,
  title={Polarizable force fields for biomolecular simulations: Recent advances and applications},
  author={Jing, Zhifeng and Liu, Chengwen and Cheng, Sara Y and Qi, Rui and Walker, Brandon D and Piquemal, Jean-Philip and Ren, Pengyu},
  journal={Annual Review of biophysics},
  volume={48},
  number={1},
  pages={371--394},
  year={2019},
  publisher={Annual Reviews}
}

@inproceedings{geng2025mean,
  title={Mean flows for one-step generative modeling},
  author={Geng, Zhengyang and Deng, Mingyang and Bai, Xingjian and Kolter, J Zico and He, Kaiming},
  booktitle={Neural Information Processing Systems (NeurIPS)},
  year={2025}
}

@inproceedings{douc2005comparison,
  title={Comparison of resampling schemes for particle filtering},
  author={Douc, Randal and Capp{\'e}, Olivier},
  booktitle={ISPA 2005. Proceedings of the 4th International Symposium on Image and Signal Processing and Analysis, 2005.},
  pages={64--69},
  year={2005},
  organization={Ieee}
}

@article{lewis2025scalable,
  title={Scalable emulation of protein equilibrium ensembles with generative deep learning},
  author={Lewis, Sarah and Hempel, Tim and Jim{\'e}nez-Luna, Jos{\'e} and Gastegger, Michael and Xie, Yu and Foong, Andrew YK and Satorras, Victor Garc{\'\i}a and Abdin, Osama and Veeling, Bastiaan S and Zaporozhets, Iryna and others},
  journal={Science},
  volume={389},
  number={6761},
  pages={eadv9817},
  year={2025},
  publisher={American Association for the Advancement of Science}
}
